\documentclass[11pt]{article}
\usepackage{fullpage}

\usepackage[utf8]{inputenc} 
\usepackage[T1]{fontenc}    
\usepackage{url}            
\usepackage{booktabs}       
\usepackage{amsfonts}       
\usepackage{nicefrac}       
\usepackage{multirow}
\usepackage{makecell}
\usepackage{hhline}
\usepackage{natbib}
\usepackage{mathtools}

\usepackage{titletoc}
\usepackage{tocloft}
\usepackage{braket}
\usepackage[a4paper,margin=1in]{geometry}
\usepackage[toc,page,header]{appendix}
\usepackage{minitoc}
\usepackage{booktabs}
\usepackage{multirow}

\usepackage[colorlinks,citecolor=blue,urlcolor=blue,linkcolor=blue,linktocpage=true,pagebackref=true]{hyperref}
\renewcommand*{\backref}[1]{}
\renewcommand*{\backrefalt}[4]{(\small%
    \ifcase #1 not cited%
          \or cited on page~#2%
          \else cited on pages #2%
    \fi%
    )}
\usepackage{wustyle}

\usepackage{enumitem}

\usepackage[most]{tcolorbox}
\tcbset{tightbox/.style={
  enhanced,
  colback=white,          
  colframe=black!20,      
  boxrule=0.1pt,          
  left=1pt, right=1pt,    
  top=1pt, bottom=1pt,    
  boxsep=0pt,             
  arc=1pt,                
  before skip=1pt,        
  after skip=1pt,
}}

\usepackage{wrapfig}

\usepackage{framed}
\colorlet{shadecolor}{orange!15}

\definecolor{c4}{RGB}{255,225,187}
\definecolor{c2}{RGB}{209, 233, 184}
\definecolor{c3}{RGB}{218,243,246}
\definecolor{c1}{RGB}{249, 229, 229}
\definecolor{c5}{RGB}{255, 128, 128}
\definecolor{c6}{RGB}{251, 132, 002}

\newcommand{\CBC}{\left(\frac{1}{B_1}-\frac{1}{B_2}\right)}

\newcommand{\switch}{{B_1\to B_2}} 
\newcommand{\zet}{P}
\def\gA{{\mathcal{A}}}
\def\gB{{\mathcal{B}}}

\def\gI{{\mathcal{I}}}

\title{\bf Functional Scaling Laws for Batch Size Scheduling: \\ Fast Catch-Up Effect from Linear Regression to \\ LLM Pretraining}

\title{\bf Fast Catch-Up, Late Switching: Optimal Batch Size Scheduling via Functional Scaling Laws}

\author{Jinbo Wang$^{1,}$\thanks{Equal contribution}~, Binghui Li$^{2,}$\footnotemark[1]~, Zhanpeng Zhou$^{3}$, Mingze Wang$^{1}$, Yuxuan Sun$^{4}$\\ {Jiaqi Zhang$^{5,}$\thanks{Corresponding author.}~, Xunliang Cai$^{5}$, Lei Wu$^{1,2,6,}$\footnotemark[2]} \\\\
$^{1}$School of Mathematical Sciences, Peking University\\
$^{2}$Center for Machine Learning Research, Peking University\\
$^{3}$School of Computer Science, Shanghai Jiao Tong University\\ 
$^{4}$State Key Laboratory of Cognitive Intelligence,\\
University of Science and Technology of China\\ 
$^{5}$Meituan, Beijing\ \ $^{6}$AI for Science Institute, Beijing \\\\
\texttt{\footnotemark[1]\,\,wangjinbo@stu.pku.edu.cn,libinghui@pku.edu.cn}\\
\texttt{\footnotemark[2]\,\,zhangjiaqi39@meituan.com,leiwu@math.pku.edu.cn}}
\date{(Accepted at ICLR 2026)}

\begin{document}

\maketitle

\doparttoc
\faketableofcontents

\vspace*{-1.5em}
\begin{abstract}
Batch size scheduling (BSS) plays a critical role in large-scale deep learning training, influencing both optimization dynamics and computational efficiency. Yet, its theoretical foundations remain poorly understood.
In this work, we show that the {\bf functional scaling law (FSL)} framework introduced in \citet{li2025fsl} provides a principled lens for analyzing BSS. Specifically, we characterize the optimal BSS under a fixed data budget and show that its structure depends sharply on task difficulty. For easy tasks, optimal schedules keep increasing batch size throughout.  In contrast, for hard tasks, the optimal schedule  maintains small batch sizes for most of training and switches to large batches only in a late stage. 
To explain the emergence of late switching, we uncover a dynamical mechanism---the \textbf{fast catch-up} effect---which also manifests in large language model (LLM) pretraining. After switching from small to large batches, the loss rapidly aligns with the constant large-batch trajectory. Using FSL, we show that this effect stems from rapid forgetting of accumulated gradient noise, with the catch-up speed determined by task difficulty. Crucially, this effect implies that \emph{large batches can be safely deferred to late training} without sacrificing performance, while substantially reducing data consumption. Finally,
extensive LLM pretraining experiments—covering both Dense and MoE architectures with up to \textbf{1.1B} parameters and \textbf{1T} tokens—validate our theoretical predictions. Across all settings, late-switch schedules consistently outperform constant-batch and early-switch baselines.
\end{abstract}

\section{Introduction}

Large language model (LLM) pretraining demands massive computational resources, making training efficiency a central challenge. At scale, training efficiency depends critically on parallelism, and increasing the batch size directly improves hardware utilization and throughput~\citep{goyal2017accurate,brown2020language,hoffmann2022training}. Large-batch training has therefore become indispensable for scalable LLM pretraining.

However, using a constant large batch size throughout training is  suboptimal in terms of sample efficiency~\citep{mccandlish2018empirical,merrill2025critical}. From a stochastic optimization perspective, the batch size determines the noise scale of stochastic gradients: each update can be viewed as the population gradient perturbed  by noise whose variance decreases with the batch size. In the early stages of training, the optimization dynamics are signal-dominated, so aggressively reducing noise via large batches yields limited benefit while consuming more data. As training proceeds, the signal weakens and the influence of gradient noise increases, making larger batches more effective for improving iteration efficiency. This motivates {\bf batch size scheduling} (BSS), i.e., dynamically increasing the batch size during training.

Indeed, BSS has become ubiquitous in industrial-scale LLM pretraining, adopted in  models such as GPT-3~\citep{brown2020language}, PaLM~\citep{chowdhery2023palm}, LLaMA-3~\citep{grattafiori2024llama}, DeepSeek-V3~\citep{liu2024deepseek}, MiniMax-01~\citep{li2025minimax}, Nemotron-4~\citep{parmar2024nemotron,adler2024nemotron}, and GLM-4.5~\citep{zeng2025glm}. This widespread adoption calls for a principled understanding of how batch size scheduling shapes training dynamics and efficiency. Yet existing analyses either focus on constant batch sizes~\citep{ma2018power,zhang2024does} or rely  on empirical and heuristic insights~\citep{smith2017don,mccandlish2018empirical,merrill2025critical}. As a result, current BSS design often depends on heuristic tuning or expensive large-scale experimentation.

\begin{figure}[!t]
    \centering
    \includegraphics[width=0.39\linewidth]{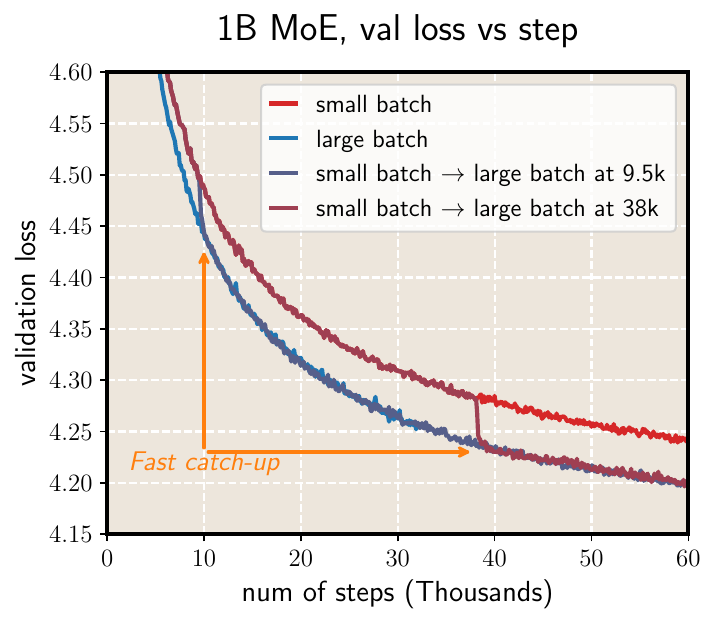}
    \hspace{1.5em}
    \includegraphics[width=0.39\linewidth]{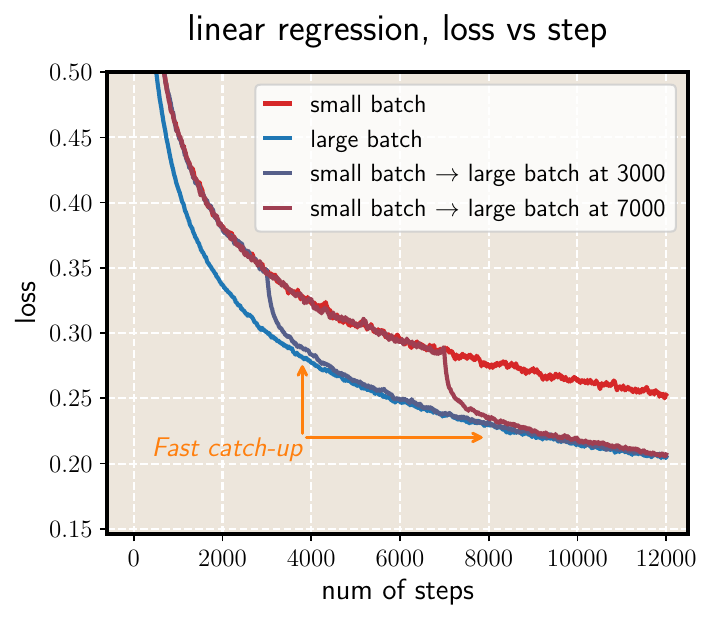}
    \vspace*{-.4em}
    \caption{
    {\bf The fast catch-up effect when switching from a small to a large batch size.}
    \textbf{Left:} Validation loss versus training steps for a 1B-parameter MoE model trained on approximately 0.4T tokens under four batch-size schedules: constant small batch, constant large batch, small-to-large with early switch, and small-to-large with late switch.
    \textbf{Right:} Validation loss versus training steps in the theoretical setting with $s=0.3$ and $\beta=1.5$ (the hard-task regime), which demonstrates the same catch-up effect.}
    \label{fig: first}
    \vspace{-1em}
\end{figure}

The functional scaling law (FSL) framework introduced in \citet{li2025fsl} provides a continuous-time modeling of how batch size and learning rate schedules affect loss dynamics. While originally derived for linear regression and kernel regression, FSL exhibits strong expressive power for modeling the loss dynamics of practical LLM pretraining. However, \citet{li2025fsl} focuses solely on learning rate schedules. In this paper, we extend this framework to analyzing BSS. Our contributions are as follows.
\begin{itemize}[leftmargin=2em, topsep=5pt, itemsep=3pt]
    \item \textbf{Optimal batch size schedule.} Under the FSL framework, we derive the optimal batch size schedule under a fixed data (or compute) budget. The optimal BSS depends critically on task difficulty: easy tasks favor a monotonically increasing schedule, while hard tasks require keeping batch sizes small for most of training, with growth deferred to a late phase. This stable-growth strategy increases the number of optimization steps under a fixed data budget, which benefits hard tasks. Extending to practical few-stage schedules, we find that easy tasks again favor constant large-batch training, whereas hard tasks demand a prolonged small-batch phase followed by a late switch to large batches.

\item \textbf{The fast catch-up effect.} To explain why large batches can be safely deferred for hard tasks, we uncover a striking and highly robust \emph{fast catch-up effect}: when training switches from a small to a large batch size, the  loss rapidly collapses to that of the constant large-batch run. This phenomenon appears consistently across LLM pretraining experiments with diverse architectures, model scales, and data regimes (see Figure~\ref{fig: first}).  Using FSL, we further provide a theoretical explanation of this effect and quantitatively characterize how task difficulty governs the speed of catch-up.

\item \textbf{Large-scale validation of late-switch superiority.}  
The fast catch-up effect implies that large batches can be safely deferred to late training without sacrificing  performance, while substantially reducing data consumption.
We validate this  principle through extensive LLM pretraining experiments spanning Dense and MoE architectures, model sizes from 50M to {\bf 1B} parameters, and data scales from 10B to {\bf 1T} tokens. Across all settings, stage-wise BSS with late switching consistently outperforms constant-batch and early-switch baselines.

\end{itemize}


\subsection{Related Work}

\paragraph{Neural scaling laws.}
\citet{hestness2017deep} first observed that the performance of deep learning follows predictable power-law relationships with model and data size, a phenomenon later formalized as \emph{neural scaling laws}~\citep{kaplan2020scaling}.
These laws have since become  guiding principles for configuring large-scale training and been refined across architectures and training regimes~\citep{henighan2020scaling, hoffmann2022training, kadra2023power, aghajanyan2023scaling, muennighoff2023scaling, tissue2024scaling, luo2025multi,qiu2025scaling}, with parallel theoretical efforts explaining their  origins and mechanisms~\citep{bordelon2024dynamical,lin2024scaling, bahri2024explaining, paquette2024fourplus, yan2025larger,kunstner2025scaling,li2026muon}. In this work, we build on the framework of \citet{li2025fsl} to provide a scaling-law analysis of batch size scheduling.

\paragraph{Large-batch training and batch size scheduling.}
Large-batch training is essential for leveraging hardware parallelism at scale.  Existing work largely focuses on \emph{static} batch sizes, aiming to determine how large the batch size can be increased without sacrificing data efficiency, typically characterized by the critical batch  size~\citep{mccandlish2018empirical, ma2018power, kaplan2020scaling, gray2024normalization, zhang2024does, merrill2025critical}.
In practice, however, LLM pretraining routinely employs \emph{batch size schedules}. Despite its prevalence, BSS has received far less theoretical attention than learning rate schedules~\citep{defazio2023optimal, hu2024minicpm, hagele2024scaling}. Existing analyses of BSS either rely on heuristic arguments~\citep{smith2017don, mccandlish2018empirical} or framed as optimal control problems~\citep{lee2022trajectory, zhao2022batch, perko2023unlocking}, offering limited structural insight. In contrast, we develop a scaling-law-based theory of BSS that systematically explains empirical practice and yields new design principles.

\paragraph{One-pass SGD in kernel regression.}
The convergence of one-pass stochastic gradient descent (SGD) in kernel regression—often interpreted as high-dimensional linear regression—has been extensively studied. In particular, \citet{dieuleveut2015non, mucke2019beating} showed that \emph{averaged} one-pass SGD achieves the minimax-optimal rate $D^{-s\beta/(s\beta+1)}$ in easy-task regimes and the rate $D^{-s}$ in hard-task regimes. Subsequent work further established that the same rates can be attained by  \emph{last iterate} when combined with appropriate learning rate decay~\citep{wu2022last, lin2024scaling, li2026optimal}.
In contrast, we show that one-pass SGD with a \emph{constant} learning rate, when coupled with a properly designed BSS, achieves the same optimal rates.



\section{Preliminaries}
\label{sec:preliminaries}


\vspace*{-.4em}
{\bf Notation.} Throughout the paper, the notation $\eqsim$ indicates equivalence up to a  constant factor, and $\lesssim$ (resp.~$\gtrsim$) indicates inequality up to a constant factor.  
For two nonnegative functions $f, g: \mathbb{R}_{\ge 0} \to \mathbb{R}_{\ge 0}$, we write $f(t) \eqsim g(t)$ if there exist constants $C_1, C_2 > 0$, independent of $t$, such that
$
C_1 f(t) \le g(t) \le C_2 f(t), \; \forall\, t \ge 0.
$

\vspace*{-.4em}
\subsection{Feature-Space Linear Regression}

\vspace*{-.2em}
Let $\cX$ and $\cD$ denote the  input domain and  distribution, respectively.  Labels are generated as $y=f^\star(\bx) + \epsilon$ with $\epsilon \sim \mathcal{N}(0,\sigma^2)$.  We assume $\sigma \gtrsim 1$ and  the target function $f^\star$ is given by $f^\star(\bx) := \langle \boldsymbol{\bphi}(\bx), \boldsymbol{\theta}^\star \rangle$. Here, $\boldsymbol{\bphi}: \cX\to \RR^{N}$ is a feature map and $\boldsymbol{\theta}^\star\in\RR^N$ (with $N\in \mathbb{N}_{+}\cup \{\infty\}$) is the unknown target parameter. We assume  $\bphi(\bx)\sim\cN(\boldsymbol{0},\mathbf{H})$ with $\{\lambda_j\}_{j=1}^N$ denoting the eigenvalues of $\bH:=\EE_{\bx\sim\cD}\bigl[\boldsymbol{\bphi}(\bx)\boldsymbol{\bphi}(\bx)^\top\bigr] $ in a decreasing order.

\begin{assumption}[Power-law structures] \label{ass:power-law}
    The following two conditions hold:
    \begin{itemize}[leftmargin=3em, topsep=2pt]
        \item \textbf{(Capacity condition)}\, 
              $\lambda_j \eqsim j^{-\beta}$ for some $\beta \in (1,\infty)$.
        \item \textbf{(Source condition)}\,
              $|\theta^\star_j|^2\eqsim j^{-1} \lambda_j^{s-1}=j^{-[1+(s-1)\beta]}$ for some $s \in (0,\infty)$.
    \end{itemize}
\end{assumption}
The {\it capacity exponent} $\beta$  controls the decay rate of the  eigenvalues.  Smaller $\beta$ corresponds to a larger effective rank of the spectrum and thus higher model capacity.
The \emph{source exponent} $s$ measures the alignment of the target function with the kernel eigenstructure: 
smaller $s$ corresponds to harder learning problems, with more energy concentrated in high-frequency components.  
These capacity and source conditions are standard in the analysis of kernel methods and have recently been adopted in scaling-law studies \citep{paquette2024fourplus,lin2024scaling,bordelon2024feature,li2025fsl}.
A more detailed interpretation of the above setup is provided in Appendix~\ref{subsec: interpretation}.

\vspace*{-.8em}
\paragraph*{One-pass SGD.}
We learn the target function $f^\star$ using a student model $f(\bx;\boldsymbol{\theta}) := \langle \boldsymbol{\bphi}(\bx), \boldsymbol{\theta} \rangle$ by minimizing the  population risk $\mathcal{R}(\boldsymbol{\theta}) := \frac{1}{2}\EE_{\bx\sim\mathcal{D}}[(f(\bx;\boldsymbol{\theta}) - y)^2]$ via one-pass SGD. At each iteration $1\leq k \leq K$, SGD samples a mini-batch  $\cS_{k} = \{(\bx_{k,i}, y_{k,i})\}_{i=1}^{B_k}$  and performs the update
\vspace*{-.2em}
\begin{equation}
    \boldsymbol{\theta}_{k+1}
    = \boldsymbol{\theta}_{k}
    -  \frac{\eta}{B_k}\sum_{i=1}^{B_k}
    \nabla_{\boldsymbol{\theta}} \,\left[\tfrac{1}{2}\big(f(\bx_{k,i};\boldsymbol{\theta}_{k})-y_{k,i}\big)^2\right], \label{eqn: sgd2}
\end{equation}
where $\eta > 0$ is a constant learning rate and $(B_1, B_2, \cdots, B_K)$ denotes the {\bf batch size schedule (BSS)}.  Notably, the iteration~\eqref{eqn: sgd2}  can be rewritten as
\begin{equation}\label{eqn: sgd0}
    \boldsymbol{\theta}_{k+1} =  \boldsymbol{\theta}_{k} - \eta \left(\nabla\cR(\boldsymbol{\theta}_{k}) +  \boldsymbol{\xi}_{k}\right),
\end{equation}
where $\bxi_k$  denotes the gradient noise that follows
$
    \EE[\bxi_k]=0, \EE[\bxi_k\bxi_k^\top]=\Sigma(\boldsymbol{\theta}_{k})/B_k,
$
where $\Sigma(\boldsymbol{\theta})$ represents the covariance of gradient noise at $\boldsymbol{\theta}$ with the batch size $1$. 
The learning performance is measured using the excess risk:
$
    \cE(\btheta) := \cR(\btheta) - \frac{1}{2}\sigma^2 = \frac{1}{2}\|\btheta-\btheta^\star\|_{\bH}^2,
$
where $\|\bv\|_{\mathbf{H}}^2 := \bv^\top \mathbf{H}\bv$.

\subsection{Functional Scaling Laws}

We analyze the loss dynamics of SGD using a continuous-time stochastic differential equation (SDE) model. The discrete update~\eqref{eqn: sgd0} can be modeled by the following It\^{o} SDE~\citep{li2019stochastic,orvieto2019continuous,ankirchner2024comparison}:
\begin{align}\label{eqn: sde}
    \dd \bar{\boldsymbol{\theta}}_t = - \nabla \cR(\bar{\boldsymbol{\theta}}_t)\,\dd t + \sqrt{\frac{\eta}{b(t)}\Sigma(\bar{\boldsymbol{\theta}}_t)}\,\dd \bW_t,
\end{align}
where $\bW_t \in \mathbb{R}^{N}$ is an $N$-dimensional Brownian motion, and $b \in C(\mathbb{R}_{\geq 0})$ is the continuous-time batch size schedule with $b(k\eta)=B_k$ for all $k \in \mathbb{N}$. Here $t=kh$ represents continuous training time, with each discrete iteration $k$ corresponding to time $t = k\eta$.

For the SDE~\eqref{eqn: sde}, \citet{li2025fsl} derived a functional scaling law (FSL) that characterizes the loss dynamics in  continuous training time: 
\begin{theorem}[Functional Scaling Law]
\label{thm: fsl}
  Under Assumptions~\ref{ass:power-law}, for sufficiently large $t$,
     \begin{equation}\label{eqn: fsl}
     \EE[\cE(\bar{\btheta}_t)]\eqsim 
     \underbrace{t^{-s}}_{\textnormal{signal learning}} + 
     \underbrace{\eta \sigma^2\int_{0}^{t}\frac{\mathcal{K}(t-\tau)}{b(\tau)} \dd \tau}_{\textnormal{noise accumulation}},
      \end{equation}
      where $\mathcal{K}(t) :=(t+1)^{-(2-1/\beta)}$.
\end{theorem}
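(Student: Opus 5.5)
Since the problem is linear regression, I will work in the eigenbasis of $\mathbf{H}$ and write $\bar{\btheta}_t-\btheta^\star=:\bm{\delta}_t$ with coordinates $\delta_{t,j}$. Then $\nabla\cR(\bar\btheta_t)=\mathbf{H}\bm{\delta}_t$, and the SDE~\eqref{eqn: sde} becomes linear in the drift. Write $\bm{\Pi}_t:=\EE[\bm{\delta}_t\bm{\delta}_t^\top]$. Itô's formula then gives the exact matrix ODE
\begin{equation*}
\tfrac{\dd}{\dd t}\bm{\Pi}_t=-\mathbf{H}\bm{\Pi}_t-\bm{\Pi}_t\mathbf{H}+\tfrac{\eta}{b(t)}\EE[\Sigma(\bar\btheta_t)] .
\end{equation*}
For Gaussian features, the single-sample noise covariance has the standard closed form $\Sigma(\btheta)=\mathbf{H}\bm{\delta}\bm{\delta}^\top\mathbf{H}+(\|\bm{\delta}\|_{\mathbf{H}}^2+\sigma^2)\mathbf{H}$, using Isserlis' theorem. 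The diagonal entries $p_j(t):=(\bm{\Pi}_t)_{jj}$ then satisfy
\begin{equation*}
\dot p_j=-2\lambda_j p_j+\tfrac{\eta}{b(t)}\big(\lambda_j^2 p_j+\lambda_j(2\cE_t+\sigma^2)\big),\qquad \cE_t=\tfrac12\sum_j\lambda_j p_j .
\end{equation*}
This is a closed system of Volterra type in $\cE_t$.

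\textbf{Step 1 (sandwich).} I drop the multiplicative terms $\lambda_j^2p_j$ and $2\cE_t\lambda_j$, both of which are nonnegative, to get the lower bound. Variation of constants gives
\[
p_j(t)\ge e^{-2\lambda_j t}|\theta^\star_j|^2+\eta\sigma^2\lambda_j\int_0^t e^{-2\lambda_j(t-\tau)}b(\tau)^{-1}\dd\tau .
\]
For the upper bound I keep these terms. Assuming $\eta\lambda_1/b\le 1$, so that $\lambda_j^2 p_j\le\lambda_j p_j$ up to a constant, I absorb $\lambda_j^2p_j$ into the decay rate, giving $-(2-\eta\lambda_j/b)\lambda_j\le-\lambda_j$. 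The $2\cE_t$ term I treat as a perturbation of the noise level $\sigma^2$.

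\textbf{Step 2 (sum over the spectrum).} Summing $\tfrac12\sum_j\lambda_j(\cdot)$ reduces everything to two spectral sums.
\begin{itemize}
\item The first is $\sum_j\lambda_j|\theta_j^\star|^2e^{-c\lambda_jt}\eqsim\sum_j j^{-1-s\beta}e^{-cj^{-\beta}t}\eqsim t^{-s}$. The terms with $j\lesssim t^{1/\beta}$ are exponentially killed, and the tail $\sum_{j>t^{1/\beta}}j^{-1-s\beta}$ gives $t^{-s}$.
\item The second is $\sum_j\lambda_j^2e^{-c\lambda_j u}\eqsim\sum_j j^{-2\beta}e^{-cj^{-\beta}u}\eqsim (1+u)^{-(2-1/\beta)}=\mathcal K(u)$. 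This follows by comparison with the integral $\int x^{-2\beta}e^{-ux^{-\beta}}\dd x$ for $u\ge1$, and because the sum is finite since $\beta>1$ for $u$ near $0$.
\end{itemize}
These two sums give the two terms of~\eqref{eqn: fsl} with constants that only change between the lower and upper bounds. Note that $\sigma\gtrsim1$ ensures the $t^{-s}$ and noise terms are not dominated by any $O(1)$ cross-term.

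\textbf{Step 3 (main obstacle: the feedback term).} For the upper bound I must control the extra term $\eta\int_0^t\mathcal K(t-\tau)\,2\cE_\tau/b(\tau)\,\dd\tau$. This makes $\cE_t$ satisfy a renewal inequality $\cE_t\le F(t)+\tfrac{2\eta}{b_{\min}}\int_0^t\mathcal K(t-\tau)\cE_\tau\dd\tau$, where $F$ is the right-hand side of~\eqref{eqn: fsl}. Since $\mathcal K$ is integrable ($2-1/\beta>1$), we have $\|\mathcal K\|_{L^1}<\infty$. Assuming $\eta\|\mathcal K\|_{L^1}/b_{\min}$ is below a small constant, which is the implicit stability/small-learning-rate condition behind ``sufficiently large $t$'', I would iterate the Volterra inequality (a Neumann series). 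This shows $\cE\lesssim F+\mathcal K^{*n}*F$ summed geometrically.

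The delicate point is that convolution with $\mathcal K$ preserves the polynomial-decay class of $F$ up to constants. For $t^{-s}$ this is a standard split of the integral at $\tau=t/2$: one half is bounded by $\|\mathcal K\|_{L^1}t^{-s}$, and the other by $\mathcal K(t/2)\int_0^{t/2}\tau^{-s}$. The second piece is $\lesssim t^{-s}$ only when $2-1/\beta>1$; otherwise it must be compared with the noise term, which is itself at least $\eta\sigma^2 t^{-(2-1/\beta)}$-order. For the noise term, the convolution $\mathcal K*\mathcal K\lesssim\mathcal K$ holds by the same splitting. This yields $\cE_t\lesssim F(t)$ for large $t$, completing the sandwich.
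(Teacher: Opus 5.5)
Your route is the paper's: second moments of the eigen-coordinates via It\^o, the Wick/Isserlis closed form $\Sigma=\mathbf{H}\boldsymbol{\delta}\boldsymbol{\delta}^\top\mathbf{H}+(\|\boldsymbol{\delta}\|_{\mathbf{H}}^2+\sigma^2)\mathbf{H}$, the two spectral sums giving $t^{-s}$ and $\mathcal{K}$, and a Neumann series for the Volterra feedback. That series rests on $\mathcal{K}*\mathcal{K}\lesssim\mathcal{K}$ and a small-$\eta$ condition. The paper uses the same implicit small-$\eta$ assumption (its factor $\frac{1}{1-\eta}$). ``Sufficiently large $t$'' is not that condition: it is only what makes the signal sum $\eqsim t^{-s}$ rather than $\eqsim 1$ near $t=0$. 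The one structural difference is how you treat $\frac{\eta}{b}\lambda_j^2p_j$. You absorb it into the decay rate, at the cost of $\eta\lambda_1/b\le 1$ and rate $\lambda_j$ instead of $2\lambda_j$. The paper instead bounds $\mathbf{H}\boldsymbol{\delta}\boldsymbol{\delta}^\top\mathbf{H}\preceq\|\boldsymbol{\delta}\|_{\mathbf{H}}^2\mathbf{H}$ to get $(2\mathcal{E}+\sigma^2)\mathbf{H}\preceq\Sigma\preceq(4\mathcal{E}+\sigma^2)\mathbf{H}$ and keeps rate $2\lambda_j$. Either works.

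The end of Step 3 fails as written: convolution with $\mathcal{K}$ does not keep the signal term in its decay class. Take $e(\tau)\eqsim(1+\tau)^{-s}$; your $\int_0^{t/2}\tau^{-s}\,\mathrm{d}\tau$ diverges for $s\ge 1$. The piece $\mathcal{K}(t/2)\int_0^{t/2}(1+\tau)^{-s}\,\mathrm{d}\tau$ is $\lesssim t^{-s}$ if and only if $s\le 2-1/\beta$. It is not ``when $2-1/\beta>1$'', which always holds since $\beta>1$. For $s>2-1/\beta$ you get $\mathcal{K}*e\eqsim t^{-(2-1/\beta)}\gg t^{-s}$.

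Your fallback of comparing with the noise term is the right idea. It does not close, however, once $1/b(\tau)$ has been replaced by $1/b_{\min}$. The spurious term becomes $\frac{\eta}{b_{\min}}t^{-(2-1/\beta)}$, while the noise term scales with the actual $1/b$ on $[0,t]$ and can be far smaller. For example, with the easy-task optimal schedule, $b_{\min}=B_{\min}$, and $s>2-1/\beta$, the spurious term dominates $D^{-s\beta/(1+s\beta)}$ as $D\to\infty$.

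The fix is to keep the weight in the first application:
$\eta\int_0^t\mathcal{K}(t-\tau)\frac{e(\tau)}{b(\tau)}\,\mathrm{d}\tau\le e(0)\,\eta\int_0^t\frac{\mathcal{K}(t-\tau)}{b(\tau)}\,\mathrm{d}\tau\lesssim\eta\sigma^2\int_0^t\frac{\mathcal{K}(t-\tau)}{b(\tau)}\,\mathrm{d}\tau$.
This uses $e(0)\eqsim\sum_j j^{-1-s\beta}<\infty$ and $\sigma\gtrsim 1$. This is exactly where $\sigma\gtrsim 1$ is needed, and it needs no case split in $s$. It is also the step the paper leaves implicit when it passes from $g+\mathcal{T}g$ to the right-hand side of the FSL. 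The higher iterates are then handled as in the paper: bound all but one factor $\eta/b$ by $\eta/b_{\min}$ and use $\mathcal{K}*\mathcal{K}\lesssim\mathcal{K}$.
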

The above theorem is a  simplification of  \citet[Theorem 4.1]{li2025fsl} for constant learning rate.
For completeness, we provide a self-contained derivation of the above FSL in Appendix~\ref{subsec: proof for fsl}. {\bf This law establishes a functional-level map from the BSS  function to the loss at time $t$} and notably, the two terms exhibit a clean interpretation:
\begin{itemize}[leftmargin=2em, topsep=1pt, itemsep=1pt]
   \item  The signal-learning term corresponds to the learning under full-batch gradient descent, capturing the rate at which SGD extracts the signal $f^\star$. This rate is determined by the source exponent $s$.
   \item The noise-accumulation term characterizes how the BSS shapes the dissipation of gradient noise. The forgetting kernel $\mathcal{K}(t-\tau)$ characterizes how the noise injected at time $\tau$ still affects the loss at time $t$. Due to $\mathcal{K}(t) = (t+1)^{-(2-1/\beta)}$, a higher-capacity model (smaller $\beta$) tends to forget noise more slowly.
\end{itemize}

While the FSL framework was introduced by \citet{li2025fsl}, their analysis was restricted to constant batch sizes and focused primarily on the analysis of learning rate scheduling. We extend this framework by showing that FSL also provides a principled tool for analyzing how batch size scheduling influences optimization dynamics and training  efficiency.

\vspace*{-.2em}
\section{Theoretical Analyses via Functional Scaling Laws} 
\label{sec: theory}

We begin by asking the following question:
\begin{tcolorbox}[colback=blue!5, colframe=blue!10, arc=0mm, boxrule=0pt, top=.3em, bottom=.3em]
\begin{center}
{\it Given a total data budget $D$, what is the optimal batch-size schedule (BSS) when the loss dynamics follows the FSL~\eqref{eqn: fsl}?}
\end{center}
\end{tcolorbox}
\vspace*{-.5em}
\noindent
For a fixed model, the data budget is equivalent to a \emph{compute budget}, since the computational cost scales linearly with   data size. Determining the optimal BSS  is challenging, as the final-step loss depends on the entire training trajectory. This is essentially  an optimal control problem~\citep{zhao2022batch,perko2023unlocking}, which generally does not admit explicit solutions. However, the explicit characterization provided by FSL enables an analytical treatment of this problem. We address the above question under two settings: (1) unconstrained schedules; and (2) stage-wise BSS motivated by practical constraints.

\subsection{Optimal Batch Size Scheduling without Shape Constraints}
\label{subsec:optimal schedule}

Under the FSL framework,  seeking the optimal BSS can be formulated as solving the following resource-constrained \emph{variational  problem}:
\begin{equation}\label{eqn: variation-bss}
\begin{aligned}
\min_{T>0,\; b(\cdot)} \quad 
& \mathcal{E}_D[T,b]
:= \frac{1}{T^{s}} 
+ \int_{0}^{T} \frac{\mathcal{K}(T-t)}{b(t)} \,\mathrm{d}t \\[0.4em]
\text{s.t.} \quad 
& \int_{0}^{T} b(t)\,\mathrm{d}t = D,
\qquad\qquad\qquad\quad~ \text{(data/compute constraint)}\\
& b(t) \ge B_{\min}, \quad \forall\, t\in[0,T],
\qquad\quad \text{(hardware constraint)}.
\end{aligned}
\end{equation}
Here, the integral constraint $\int_{0}^{T} b(t)\,\mathrm{d}t = D$ comes from the available data budget. The pointwise constraint $b(t) \ge B_{\min}$ captures hardware limitations in data-parallel training: the global batch size must be no smaller than the number of parallel devices \citep{narayanan2021efficient}.

We denote by $b^\star(\cdot)$ the optimal BSS for
problem~\eqref{eqn: variation-bss}, and let $T^\star$ be the corresponding
total training time. We further define the final-step loss as
$\mathcal{E}_D^\star := \mathcal{E}_D[T^\star, b^\star]$.

\vspace*{-.7em}
\begin{theorem}[Optimal batch size schedule]
\label{thm: optimal}
Assume $D$ and $B_{\min}$ are sufficiently large. Then:
\begin{itemize}[topsep=2pt, itemsep=2pt, parsep=1pt]
\item {\bf Easy-task regime ($s > 1 - 1/\beta$)}.
The optimal BSS satisfies
\[
b^\star(t)= B_{\max}
\bigl(T^\star - t + 1\bigr)^{\frac{1}{2\beta}-1},
\qquad 0 \le t \le T^\star,
\]
with $B_{\max}\eqsim D^{\frac{1/2+s\beta}{1+s\beta}}, \; T^\star \eqsim D^{\frac{\beta}{1+s\beta}}$. Moreover,
\[
\cE_D^\star \eqsim D^{-\frac{s\beta}{1+s\beta}}.
\]

\item {\bf Hard-task regime ($s \leq 1 - 1/\beta$)}. The optimal BSS exhibits a two-phase stable-growth structure:
\[
b^\star(t) =
\begin{cases}
B_{\min}, & 0 \le t < T_1^\star, \\[0.3em]
B_{\max}
\bigl(T^\star - t + 1\bigr)^{\frac{1}{2\beta}-1},
& T_1^\star \le t \le T^\star,
\end{cases}
\]
where $T^\star \eqsim D, \;
\frac{T^\star - T_1^\star}{T^\star} \eqsim D^{-\frac{1-1/\beta-s}{2-1/\beta}}, \; B_{\max} \eqsim D^{\frac{s+1}{2}}$.
Moreover, 
\[
  \mathcal{E}_D^\star  \eqsim D^{-s}.
\]

\end{itemize}
\end{theorem}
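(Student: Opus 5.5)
We would solve \eqref{eqn: variation-bss} in two layers: first, for fixed horizon $T$, optimize over $b(\cdot)$ in closed form; then optimize the resulting value over $T$.

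\textbf{Inner problem.} For fixed $T$, the functional $b\mapsto\int_0^T \mathcal{K}(T-t)/b(t)\,\dd t$ is strictly convex. The constraints $\int_0^T b=D$ and $b\ge B_{\min}$ are convex. Hence the KKT conditions characterize the unique minimizer. Pointwise stationarity of the Lagrangian gives $\mathcal{K}(T-t)/b(t)^2=\mu$ wherever $b(t)>B_{\min}$. With complementary slackness this yields
\[
b(t)=\max\bigl\{B_{\min},\,\mu^{-1/2}\mathcal{K}(T-t)^{1/2}\bigr\}
=\max\bigl\{B_{\min},\,B_{\max}(T-t+1)^{\frac{1}{2\beta}-1}\bigr\}.
\]
This is exactly the shape in the theorem. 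Since $\mathcal{K}$ is decreasing, the profile is increasing in $t$. It is clipped at $B_{\min}$ on an initial interval $[0,T_1)$, with $T_1=0$ if the clip is inactive. When unclipped, a Cauchy--Schwarz (equality) computation gives the noise value
\[
\frac{1}{D}\Bigl(\int_0^T \mathcal{K}^{1/2}\Bigr)^2 .
\]
Write $\alpha:=1-\tfrac{1}{2\beta}\in(\tfrac12,1)$, so that $\mathcal{K}^{1/2}(u)=(u+1)^{-\alpha}$. Then $\int_0^T\mathcal{K}^{1/2}\eqsim T^{1/(2\beta)}$, and the unclipped objective is
\[
\mathcal{E}(T)\eqsim T^{-s}+T^{1/\beta}/D .
\]

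\textbf{Easy regime.} Balancing $T^{-s}$ against $T^{1/\beta}/D$ gives $T^\star\eqsim D^{\beta/(1+s\beta)}$ and $\mathcal{E}^\star\eqsim D^{-s\beta/(1+s\beta)}$. From $\int b = B_{\max}\int(\cdot)^{-\alpha}\eqsim B_{\max}T^{1/(2\beta)}=D$ we get $B_{\max}\eqsim D\,T^{-1/(2\beta)}=D^{(1/2+s\beta)/(1+s\beta)}$. The clip must then be inactive: the smallest batch is $b(0)\eqsim B_{\max}(T^\star)^{-\alpha}$. Checking that this exceeds $B_{\min}$ is a consistency condition on $B_{\min}$ relative to $D$; we read "sufficiently large" as making the regimes consistent, and we would state the precise compatible range. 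The feasibility constraint $T\le D/B_{\min}$ must also be nonbinding, and it is, because $\beta/(1+s\beta)<1$ exactly when $s>1-1/\beta$. This inequality is where the regime boundary comes from.

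\textbf{Hard regime.} Here $\beta/(1+s\beta)\ge1$, so the unconstrained optimum would demand $T\gtrsim D$, which the budget forbids. The hardware constraint forces $T\le D/B_{\min}\eqsim D$ (treating $B_{\min}$ as a constant scale), and the clip becomes active. We would parametrize by $T$ and the switch time $T_1$. The budget splits as
\[
B_{\min}T_1+B_{\max}\int_0^{T-T_1}(u+1)^{-\alpha}\dd u = D .
\]
The noise term splits into a small-batch part and a growth part:
\[
\frac{1}{B_{\min}}\int_{T-T_1}^{T}\mathcal{K}\;\eqsim\;(T-T_1)^{-(1-1/\beta)},
\qquad
\frac{1}{B_{\max}}\int_0^{T-T_1}\mathcal{K}^{1/2}\;\eqsim\;\frac{(T-T_1)^{1/(2\beta)}}{B_{\max}} .
\]
Let $L:=T-T_1$. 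Continuity at the switch gives $B_{\max}\eqsim B_{\min}L^{\alpha}$, so both noise contributions are $\eqsim L^{-(1-1/\beta)}$, and the budget consumed by the growth phase is $\eqsim L^{\alpha}L^{1/(2\beta)}=L$. The total loss is therefore about $T^{-s}+L^{-(1-1/\beta)}$ subject to $T+L\lesssim D$. Choosing $T\eqsim D$ and balancing $L^{-(1-1/\beta)}\eqsim D^{-s}$ gives $L\eqsim D^{s/(1-1/\beta)}\le D$. This is consistent precisely when $s\le1-1/\beta$.

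Two things remain to reconcile. First, the stated ratio $L/T\eqsim D^{-(1-1/\beta-s)/(2-1/\beta)}$ and $B_{\max}\eqsim D^{(s+1)/2}$ suggest that the paper's normalization ties $B_{\min}$ and the noise scale to $D$ differently from my heuristic. So I would redo this balancing carefully, keeping $\mathcal{K}$ with exponent $2-1/\beta$ and $B_{\min}$ explicit. This bookkeeping is the main obstacle: determining exactly which quantities scale with $D$ and verifying the exponents. Second, the lower bound is immediate: $\mathcal{E}\ge T^{-s}\ge (D/B_{\min})^{-s}$. The upper bound comes from the explicit schedule. Together these give $\mathcal{E}^\star_D\eqsim D^{-s}$.
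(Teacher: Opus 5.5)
Your easy-regime argument (KKT clipping, or Cauchy--Schwarz, at fixed $T$, then balancing $T^{-s}$ against $T^{1/\beta}/D$) is the paper's. Your hard-regime rate $\mathcal{E}_D^\star\eqsim D^{-s}$ is also correct, and your direct lower bound $T^{-s}\ge (D/B_{\min})^{-s}$ is clean. However, the hard-regime claims about the schedule itself, $\frac{T^\star-T_1^\star}{T^\star}\eqsim D^{-\frac{1-1/\beta-s}{2-1/\beta}}$ and $B_{\max}\eqsim D^{\frac{s+1}{2}}$, are not proved. The mismatch you noticed is not a normalization issue: the paper also treats $B_{\min}$ (its $B_1$) as a fixed constant and drops $\eta\sigma^2$, exactly as you do. The actual error is that you choose $L=T-T_1$ by equating \emph{levels}, $L^{-(1-1/\beta)}\eqsim D^{-s}$. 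That only shows the rate $D^{-s}$ is attainable. At $\eqsim$ precision, every $L$ between $D^{s/(1-1/\beta)}$ and $\epsilon D$ (for a small constant $\epsilon$) gives loss $\eqsim D^{-s}$, so $\eqsim$-bookkeeping cannot locate the minimizer. Level balancing was legitimate in the easy regime because both terms are power laws in the same variable $T$. Here, by contrast, the signal term is nearly flat in $L$.

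What you need is the exact first-order condition in $L$, which is what the paper does with its $\mathrm{d}\mathcal{E}/\mathrm{d}T_2=0$. Your budget identity gives exactly
\[
T=\frac{D}{B_{\min}}-2\beta\Bigl[(L+1)-(L+1)^{1-\frac{1}{2\beta}}\Bigr]+L,\qquad \frac{\mathrm{d}T}{\mathrm{d}L}\to-(2\beta-1).
\]
So lengthening the growth phase has marginal signal cost $\frac{\mathrm{d}}{\mathrm{d}L}T^{-s}\approx s(2\beta-1)T^{-s-1}\eqsim D^{-s-1}$. Differentiating your two noise pieces gives a marginal gain of $\approx(2\beta-1)(L+1)^{-(2-1/\beta)}/B_{\min}$. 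Equating the two yields $(L+1)^{2-1/\beta}\eqsim T^{s+1}$, i.e. $L\eqsim D^{\frac{s+1}{2-1/\beta}}$. This gives the stated ratio, and $B_{\max}=B_{\min}(L+1)^{1-\frac{1}{2\beta}}\eqsim D^{\frac{s+1}{2}}$. Your $L\eqsim D^{\frac{s}{1-1/\beta}}$ is strictly smaller whenever $s<1-1/\beta$ (the two coincide only at the boundary), so your exponents for $T_1^\star$ and $B_{\max}$ are wrong.

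At the true optimum the noise is $D^{-\frac{(s+1)(1-1/\beta)}{2-1/\beta}}=o(D^{-s})$, so the optimum is signal-dominated rather than balanced. That fact is also what makes the step you skipped go through: showing the clipped two-phase form beats the unclipped one. An unclipped profile with $b(0)\ge B_{\min}$ consumes about $2\beta B_{\min}T$ data, so its loss is at least about $(2\beta)^s(D/B_{\min})^{-s}$. The clipped schedule with the correct $L=o(D)$ has $T=(1-o(1))D/B_{\min}$ and loss $(1+o(1))(D/B_{\min})^{-s}$. This constant-level comparison also needs the exact objective rather than $\eqsim$, and it is how the paper establishes the stable--growth structure.
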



\vspace*{-1em}
\paragraph*{The shape of the optimal BSS.}
In the easy-task regime, the optimal BSS takes the form $b^\star(t)\eqsim B_{\mathrm{max}}(T^\star-t+1)^{-\gamma}$, corresponding to a progressively increasing batch size throughout training, as illustrated in Figure~\ref{fig: optimal bss in theory} (left). The peak batch size scales with the data budget as $B_{\mathrm{max}}\eqsim D^{\alpha}$ with $\alpha>0$, indicating that \textit{larger datasets favor larger batch sizes}. This provides a theoretical explanation for the empirical practice of increasing batch size with dataset size~\citep{bi2024deepseek,zhang2024does,li2025predictablescalei}.

In the hard-task regime, the optimal BSS exhibits a stable–growth structure: it stays at the minimal batch size $B_{\min}$ for the first $T_1^\star$ steps, followed by a growth phase with the same functional form as in the easy-task regime. Notably, the growth phase occupies only a tiny fraction of the total training horizon,
$
(T^\star-T_1^\star)/T^\star=o_D(1),
$
implying that \textit{extremely large batch sizes are required only near the end of training}. See Figure~\ref{fig: optimal bss in theory} (left) for an illustration. Intuitively, for hard tasks (small $s$), maintaining a small batch size allows more optimization steps (larger $T$) under a fixed data budget, thereby significantly reducing the signal-learning term $T^{-s}$. The late-stage batch growth primarily serves to  noise reduction. This stable–growth structure can be viewed as the batch-size analogue of the warmup–stable–decay learning rate schedule~\citep{hu2024minicpm,hagele2024scaling,li2026optimal}.

\vspace*{-.6em}
\paragraph*{Same data efficiency, fewer iterations.}
In the easy-task regime, the excess risk rate $D^{-s\beta/(1+s\beta)}$ matches the minimax optimal rate of this problem~\citep[Theorem~2]{caponnetto2007optimal}. In the hard-task regime, the excess risk scales as $D^{-s}$, matching the best rate attainable by one-pass SGD~\citep{dieuleveut2016nonparametric,pillaud2018statistical}.
These  suggest that, with a properly designed BSS, constant learning rate can achieve the same data efficiency as carefully tuned learning rate schedules~\citep{lin2024scaling,li2025fsl}. The key distinction, however, lies in {\it iteration complexity}: batch-size scheduling significantly reduces the total number of iterations compared to learning rate scheduling. When coupled with modern GPU parallelism, this reduction directly translates into shorter wall-clock training time. In short, {\it batch-size scheduling preserves data efficiency while substantially reducing iteration complexity}.

\vspace*{-.6em}
\paragraph{Numerical validation.}
Although the FSL~\eqref{eqn: fsl} is derived from the continuous-time SDE~\eqref{eqn: sde}, we empirically confirm that Theorem~\ref{thm: optimal} accurately predicts the behavior of discrete-time SGD. 
Specifically, we run SGD~\eqref{eqn: sgd2} using the optimal BSS prescribed by Theorem~\ref{thm: optimal} and report the final-step loss as a function of the data size in Figure~\ref{fig: optimal bss in theory} (middle, right); see Appendix~\ref{sub:details_for_linear_regression_experiments_section_ref_sec_theory} for experimental details. The observed data scaling closely matches the theoretical predictions.
Additionally, in Appendix~\ref{subsec: opt exp}, we compare constant learning rates combined with the optimal BSS against popular learning rate schedules (cosine and warmup–stable–decay). We find that the optimal BSS with a constant learning rate achieves comparable performance to these widely used learning rate schedules.

\begin{figure}[!h]
    \centering
    \includegraphics[width=0.31\linewidth]{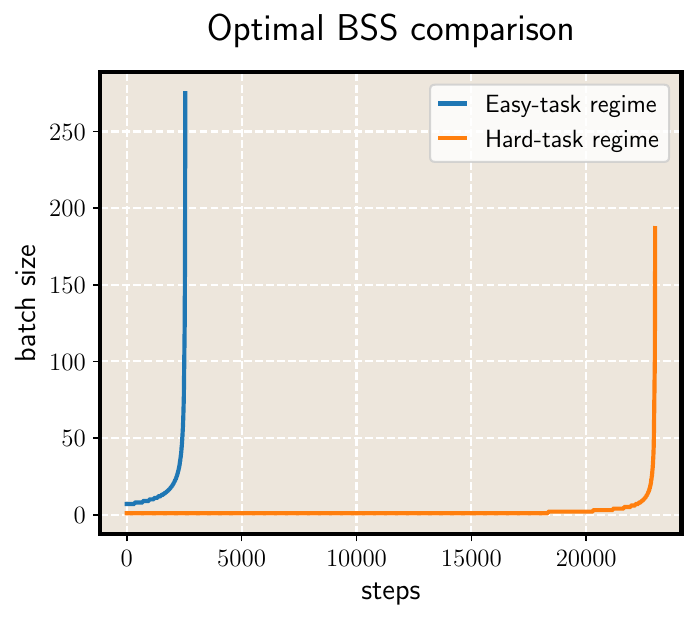}
    \includegraphics[width=0.33\linewidth]{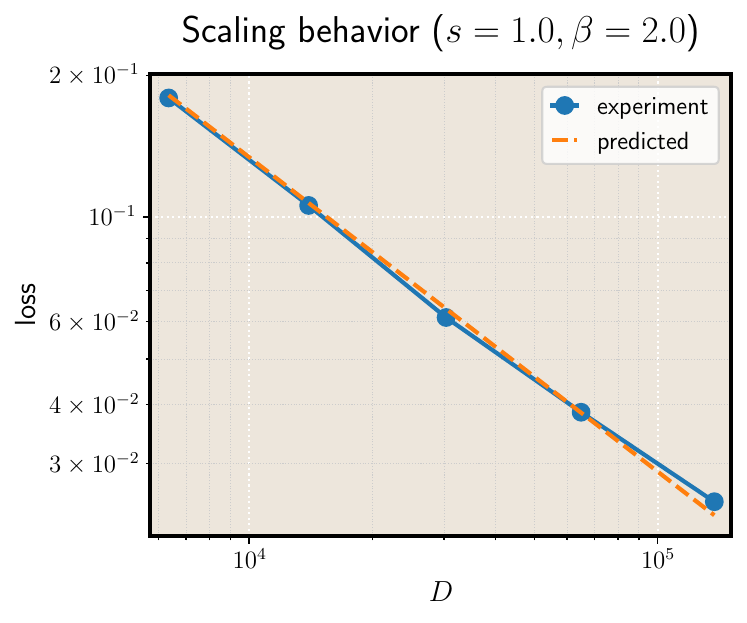}
    \includegraphics[width=0.33\linewidth]{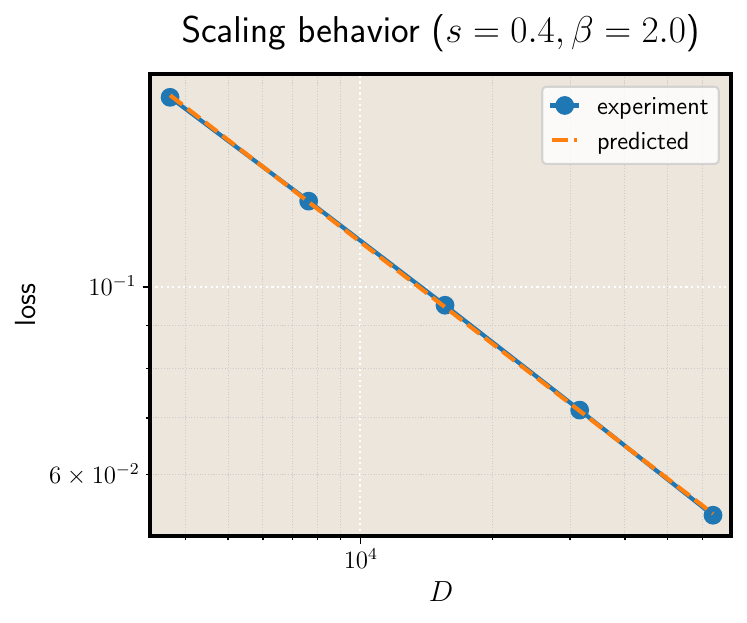}
    \vspace*{-.6em}
   \caption{\small Optimal BSS experiments for the feature-space linear regression.
    \textbf{Left:} Illustration of the optimal BSSs for the easy-task and hard-task regimes.
    \textbf{Middle:} In the easy-task regime ($s=1.0, \beta=2.0$), one-pass SGD with optimal BSS attains the predicted minimax rate $D^{-s\beta/(1+s\beta)}$.
    \textbf{Right:} In the hard-task regime ($s=0.4, \beta=2.0$), it matches the  optimal rate $D^{-s}$ attainable by one-pass SGD.
    }
    \label{fig: optimal bss in theory}
    \vspace{-1em}
\end{figure}

\vspace*{-.5em}
\subsection{Stage-Wise Optimal Batch Size Scheduling} \label{subsec: later switch theory}

\vspace*{-.5em}

Theorem~\ref{thm: optimal} shows that the unconstrained optimal BSS follows a smoothly increasing schedule. In practice, however, batch sizes are discrete and constrained by hardware limitations. Moreover, changing the batch size during training incurs nontrivial system overhead, such as data pipeline and communication reconfiguration. Consequently, practical schedules typically permit only a small number of stage-wise adjustments~\citep{liu2024deepseek,li2025minimax}.

In this section, we study the simplest nontrivial stage-wise setting: a two-stage schedule that begins with a small batch size $B_1$ and later switches to a larger batch size $B_2$. In practice, $B_1$ and $B_2$ are largely determined by hardware constraints, the key issue is to determine the optimal timing of this switch.

We denote by $\cE_{B_1\to B_2}(t)$ the loss at time $t$ under this two-stage schedule.
Let $D$ be the total number of training samples and  $\zet \in [0, D]$ denote the number of samples processed before switching from $B_1$ to $B_2$. The corresponding BSS $b_{\switch}^{\zet}(t)$, the switching time $T_{s,\zet}$, and the total training time $T_{\zet}$ are defined as follows:
\begin{align}\label{eqn: two-stage-bbs}
   \quad b_\switch^{\zet}(t) =
        \begin{cases}
        B_1, & 0 \leq t \leq T_{s,\zet}, \\
        B_2, & T_{s,\zet} < t < T_{\zet}, 
        \end{cases}
        \qquad
         T_{s,\zet} = \frac{\zet}{B_1}, \quad T_{\zet} = \frac{\zet}{B_1} + \frac{D -\zet}{B_2}. 
\end{align}
We denote by $\cE_\switch^{D}(\zet)$ the expected final-step loss under this schedule.
\begin{theorem}[Optimal two-stage batch size schedule]
\label{thm: token}
Let $B_1 < B_2$ be constants independent of $D$, and assume $D$ is sufficiently large. 
Define
$
    \zet_D^\star = \argmin_{\zet \in [0,D]} \cE_{\switch}^{D}(\zet).
$
Then:
\begin{itemize}[topsep=2pt, itemsep=2pt, parsep=1pt]
    \item If $s > 1 - 1/\beta$, then $\zet_D^\star = 0$.
    \item If $s \leq 1 - 1/\beta$, then
    $
        \frac{D - \zet_D^\star}{D}
        \eqsim
        D^{-\frac{1 - 1/\beta - s}{2 - 1/\beta}}.
    $
\end{itemize}
\end{theorem}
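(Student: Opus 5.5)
We will work directly with the FSL of Theorem~\ref{thm: fsl}, with constants $\eta\sigma^2$ absorbed. For the two-stage schedule~\eqref{eqn: two-stage-bbs}, write $L=T_\zet-T_{s,\zet}=(D-\zet)/B_2$ for the length of the large-batch phase, and set $\gamma:=2-1/\beta\in(1,2)$. The final loss then splits as
\[
\cE_\switch^D(\zet)\eqsim T_\zet^{-s}+\frac{1}{B_1}\int_{L}^{T_\zet}\mathcal{K}(u)\,\dd u+\frac{1}{B_2}\int_0^{L}\mathcal{K}(u)\,\dd u .
\]
Since $\gamma>1$, the kernel is integrable, and
\[
\int_0^L\mathcal{K}\eqsim 1\quad(L\gtrsim1),\qquad \int_L^{T}\mathcal{K}=\frac{(L+1)^{1-\gamma}-(T+1)^{1-\gamma}}{\gamma-1}.
\]
The plan is to treat this as a one-dimensional optimization in $L\in[0,D/B_2]$, using $T_\zet=D/B_1-L(B_2/B_1-1)$. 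Because $B_1,B_2$ are constants, the $\eqsim$ estimates alone will not decide the argmin. We must therefore keep exact leading terms and differentiate, writing
\[
F(L)=\Bigl(\tfrac{D}{B_1}-cL\Bigr)^{-s}+\tfrac{1}{B_1}G(L,T)+\tfrac{1}{B_2}\int_0^L\mathcal{K},\qquad c=\tfrac{B_2}{B_1}-1>0 .
\]

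\textbf{Derivative.} The derivative is $F'(L)=cs\,T^{-s-1}+\CBC\mathcal{K}(L)\cdot(\text{sign }-)+\tfrac{c}{B_1}\mathcal{K}(T)\cdot(\dots)$. More concretely, increasing $L$ has three effects. It shortens training, which costs about $s\,T^{-s-1}c$. It replaces small-batch noise with large-batch noise in the most recent window, which gains $\CBC(L+1)^{-\gamma}$. It also drops the oldest noise, an effect of order $T^{-\gamma}$. The last effect is negligible compared to the first whenever $s+1<\gamma$, which holds because $s\le 1-1/\beta$ is equivalent to $s+1\le\gamma$. In the boundary case the two effects are of the same order and only constants change.

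\textbf{Stationary point.} Setting the main terms equal gives
\[
(L^\star)^{-\gamma}\eqsim T^{-s-1}\eqsim D^{-s-1},
\qquad\text{so}\qquad L^\star\eqsim D^{(1+s)/\gamma}.
\]
Here we use $T\eqsim D$, since $L\ll D$ forces $T\approx D/B_1$. It follows that
\[
\frac{D-\zet^\star}{D}=\frac{B_2L^\star}{D}\eqsim D^{\frac{1+s}{\gamma}-1}=D^{-\frac{1-1/\beta-s}{2-1/\beta}},
\]
which is the claimed rate. To make this rigorous we will show that $F'<0$ for $L\le c_1 D^{(1+s)/\gamma}$ and $F'>0$ for $L\ge c_2D^{(1+s)/\gamma}$, checking up to $L=D/B_2$. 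On the range $L\gtrsim D$ the gain term $(L+1)^{-\gamma}$ is of order $D^{-\gamma}\le D^{-s-1}$, so $F'>0$ there as well. Unimodality is convenient but not required: sign control on the two sides is enough to locate the argmin up to constants. In the hard regime, both the interior minimum and the case $\zet=0$ can be compared directly. At $\zet=0$ the loss is about $(D/B_2)^{-s}$, which exceeds $(D/B_1)^{-s}(1+o(1))$ by a constant factor, so the interior point wins.

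\textbf{Easy regime.} Here $s>1-1/\beta$ and $s+1>\gamma$, and we must show the minimizer is the boundary $\zet=0$, i.e.\ $L=D/B_2$. The comparison is between the shortening cost $\approx T^{-s-1}$, which is now much smaller than $D^{-\gamma}$, and the noise gain $\CBC(L+1)^{-\gamma}\gtrsim D^{-\gamma}$ over the whole range $L\le D/B_2$. Hence $F'(L)<0$ on all of $[0,D/B_2]$ for large $D$. For $L\lesssim1$ we use the exact kernel $\mathcal{K}(L)\ge 2^{-\gamma}$. The oldest-noise term also needs care: it is $\tfrac{c}{B_1}\mathcal{K}(T)$ with $T\ge D/B_2$, so it is of order $D^{-\gamma}$, the same order as the gain. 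We will therefore compute it exactly. Differentiating $\tfrac1{B_1}\int_L^T\mathcal{K}$ in $L$ gives $-\tfrac1{B_1}\mathcal{K}(L)-\tfrac{c}{B_1}\mathcal{K}(T)$, so both contributions actually help, and $F'<0$ holds.

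\textbf{Main obstacle.} The FSL is only stated up to $\eqsim$, and an argmin of a function known only up to constant factors is not determined. We must therefore either work with the exact FSL expression derived in Appendix~\ref{subsec: proof for fsl}, or interpret the theorem for the model loss functional above. We will take the latter route, treating~\eqref{eqn: fsl} with the explicit kernel as the defining objective, exactly as in~\eqref{eqn: variation-bss}. The hardest step is making the derivative sign analysis uniform across the three regimes $L\lesssim1$, $1\ll L\ll D$, and $L\eqsim D$, particularly at the boundary case $s=1-1/\beta$, where the shortening cost and the oldest-noise term are of the same order.
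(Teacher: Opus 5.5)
Your proposal is correct and follows essentially the paper's argument. You differentiate the FSL objective with respect to the switch point and balance the shortening cost $sT^{-s-1}$ against the recent-noise gain $\mathcal{K}\bigl((D-\zet)/B_2\bigr)$, with the oldest-noise term $\mathcal{K}(T)/B_1$ negligible in the hard regime and only helping in the easy regime. The differences are minor: you parametrize by the large-batch duration $L$ instead of $\zet$, and you replace the paper's stationary-point-plus-second-derivative check with two-sided sign control of $F'$ on $[0,D/B_2]$. That pins down the global argmin more cleanly and makes your separate comparison with $\zet=0$ unnecessary; as written, that comparison also ignores the common $\Theta(1/B_2)$ noise floor.
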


This theorem shows that, even within the restricted class of two-stage BSS, the optimal strategy still depends sharply on task difficulty. For easy tasks, it is optimal to employ large-batch training throughout. In contrast, for hard tasks with $s < 1 - 1/\beta$, one should maintain a small batch size for most of training and switch to a large batch only at a very late stage as $(D - \zet_D^\star)/D=o_D(1)$, consistent with the behavior in the unconstrained setting.

Additionally, Theorem~\ref{thm: token} shows that the optimal switching point obeys a scaling law: 
$
    D - \zet_D^\star \sim D^{\gamma}
$
for some exponent $\gamma$ under the FSL framework. This suggests a principled tuning strategy: one can estimate the scaling exponent via small-scale pilot experiments and extrapolate the resulting optimal switching point to large-scale training.



\section{The Fast Catch-Up Effect: A Bridge to LLM Pretraining}
\label{sec: catch_up}

A central insight of the preceding analysis is that, for hard tasks, the optimal schedule switches to a very large batch size only in a late stage of training. We now provide a dynamical perspective that explains this phenomenon and extends naturally to LLM pretraining. We refer to this mechanism as the \emph{fast catch-up effect}, and regard it as a key insight for practical BSS designing.

\paragraph{The fast catch-up effect.}
Figure~\ref{fig: first} reveals a consistent phenomenon, observed from linear regression to LLM pretraining, when the batch size is increased from small to large:
\vspace*{-.1em}
\begin{tcolorbox}[colback=blue!5, colframe=blue!10, arc=0mm, boxrule=0pt, top=.3em, bottom=.3em]
\begin{center}
{\it Once the batch size increases, the {\bf loss rapidly collapses} onto the trajectory of large-batch training.}
\end{center}
\end{tcolorbox}
\vspace*{-.5em}
\noindent In other words, although the model is trained with a small batch size for most of the trajectory, it rapidly ``catches up'' to the performance of training with the larger batch throughout.

To evaluate the robustness of this phenomenon in realistic large-scale settings, we conduct experiments across multiple switching times, model architectures (Dense and MoE), and scales (1.1B parameters and 1T training tokens). Figure~\ref{fig: fast catch} shows that fast catch-up consistently occurs in all configurations.
In particular, Figure~\ref{fig: fast catch} (middle) presents a four-stage BSS (640 $\to$ 1280 $\to$ 1920 $\to$ 2560). After each stage transition, the loss trajectory rapidly collapses to that of training continuously at the corresponding larger batch size. This repeated collapse across stages underscores the robustness  of the fast catch-up effect.

\begin{figure}[!ht]
    \centering

    \includegraphics[width=0.32\linewidth]{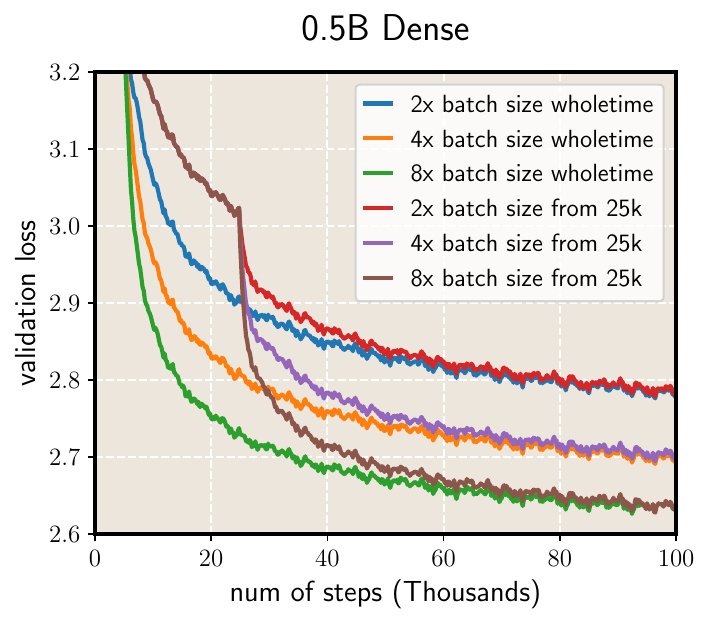}
    \includegraphics[width=0.32\linewidth]{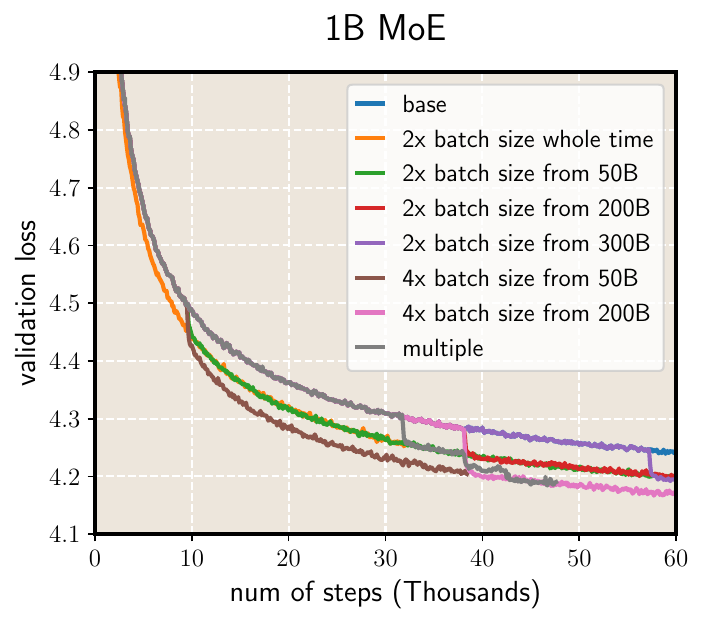}
    \includegraphics[width=0.32\linewidth]{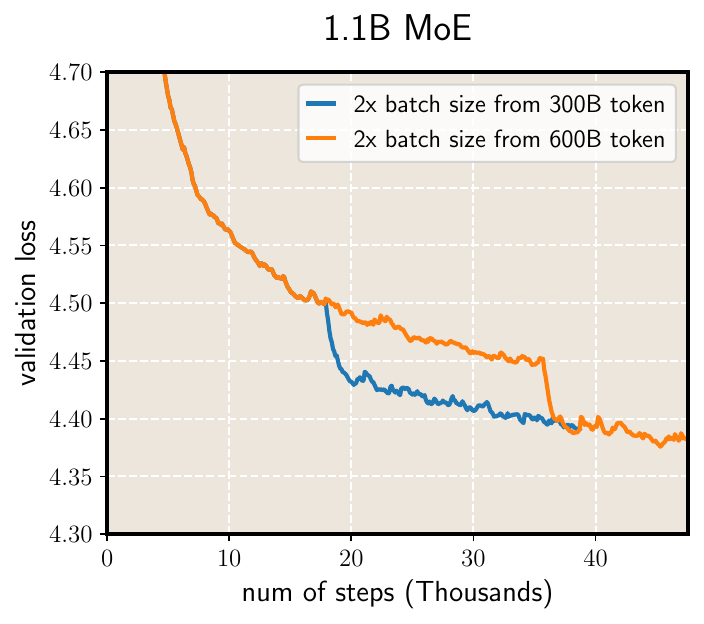}
    \vspace*{-.5em}
   \caption{\small {\bf The fast catch-up effect across diverse model architectures, model and data scales.}
    \textbf{Left:} A 0.5B-parameter LLaMA model trained on the C4 dataset with a base batch size of 512.
    \textbf{Middle:} A 1B-parameter MoE model trained on approximately 0.4T tokens with a base batch size of 640; the gray curve shows an additional 4-stage schedule beyond the two-stage runs.
    \textbf{Right:} A 1.1B-parameter MoE model trained on 1T tokens with a base batch size of 1024.
    }
    \label{fig: fast catch}
    \vspace{-1em}
\end{figure}

\paragraph*{The late-switch principle.}
The fast catch-up effect reveals a simple yet powerful principle for batch size scheduling: 
\begin{tcolorbox}[colback=blue!5, colframe=blue!10, arc=0mm, boxrule=0pt, top=.3em, bottom=.3em]
\begin{center}
{\it 
The validation loss of constant large-batch training can be matched by starting with a small batch and deferring the transition to the large batch until a late stage. 
}
\end{center}
\end{tcolorbox}
\vspace*{-.2em}
\noindent Because the subsequent large-batch phase rapidly aligns with the corresponding large-batch trajectory, the final loss and total optimization steps remain unchanged. 
At the same time, the prolonged small-batch phase substantially reduces token consumption, lowering computational cost without sacrificing performance. We term this strategy \emph{\bf late switching}, and validate its effectiveness in realistic LLM pretraining in Section~\ref{sec: experiments}.

\subsection{An Explanation via Functional Scaling Laws}

We now explain the  fast catch-up effect using  FSL. Specifically, we  consider the two-stage BSS~\eqref{eqn: two-stage-bbs} and denote by $t_\star$ the switching time. Additionally, we denote by $\cE_{B_1}(t)$ and $\cE_{B_2}(t)$ the losses under constant batch sizes $B_1$ and $B_2$, respectively.
By Theorem~\ref{thm: fsl}, for training with a constant batch size $B$ and sufficiently large $t$, the excess risk admits the decomposition
$
\cE_B(t) \eqsim t^{-s} + \eta \sigma^2/B,
$
where the first term corresponds to signal learning and the second term captures noise accumulation.

{\bf Loss gap at the switching point.} 
At the switching point $t_\star$, the loss gap is given by 
    \[
      G_\star:= \cE_{B_1}(t_\star) - \cE_{B_2}(t_\star) \eqsim  \left(t_\star^{-s} + \frac{\eta\sigma^2}{B_1}\right) - \left(t_\star^{-s} + \frac{\eta\sigma^2}{B_2}\right) = \eta\sigma^2\left(\frac{1}{B_1}-\frac{1}{B_2}\right).
    \]
Thus, the loss gap at the switching point arises purely from the difference in noise accumulation.
The signal-learning term is identical across the two runs, since both have undergone the same optimization time $t_\star$.

{\bf Post-switch gap decay (catch-up dynamics).}
After switching to the larger batch size, FSL~\eqref{eqn: fsl} implies that after an additional interval $\delta$, the gap decays as 
\begin{equation}\label{eqn: gap-decay}
\begin{aligned}
\cE_{B_1\to B_2}(t_\star+\delta) - \cE_{B_2}(t_\star+\delta) &=\int_0^{t_\star+\delta}\frac{\cK(t_\star+\delta-t)}{b_{B_1\to B_2}(t)}\dd t - \int_0^{t_\star+\delta}\frac{\cK(t_\star+\delta-t)}{b_{B_2}(t)}\dd t\\ 
& = \left(\frac{\eta\sigma^2}{B_1} - \frac{\eta\sigma^2}{B_2}\right)\int_0^{t_\star}\cK(t_\star+\delta-t)\dd t\eqsim
G_\star\,\delta^{-(1-1/\beta)}.
\end{aligned}
\end{equation}
This indicates that the catch-up dynamics progressively forget the noise accumulated during the initial small-batch phase.
Importantly, the forgetting exponent depends only on the capacity exponent $\beta$ and is independent of the task difficulty $s$. 

{\bf When is the catch-up fast?} 
We quantify ``fast'' catch-up through a comparison of time scales.
Define the catch-up time $\delta_\epsilon$ as the smallest $\delta$ such that
\[
\cE_{B_1\to B_2}(t_\star+\delta)
\le (1+\epsilon)\,\cE_{B_2}(t_\star+\delta),
\]
i.e., the switched trajectory lies within a $(1+\epsilon)$ factor of the large-batch baseline.
Combining the gap decay~\eqref{eqn: gap-decay} with 
$\cE_{B_2}(t_\star+\delta)\eqsim (t_\star+\delta)^{-s} + \eta\sigma^2/B_2$
yields
\[
\delta_\epsilon \eqsim t_\star^{\frac{s}{1-1/\beta}}.
\]
In contrast, the large-batch loss evolves on the  time scale 
$\delta \eqsim t_\star$, since the signal term $(t_\star+\delta)^{-s}$ 
changes appreciably only when $\delta \gtrsim t_\star$. 
For hard tasks with $s < 1 - 1/\beta$, we have
$
\delta_\epsilon \ll t_\star,
$
which establishes a clear \emph{time-scale separation}: 
the switched trajectory relaxes on the fast scale $\delta_\epsilon$, 
whereas the large-batch baseline evolves on the slow scale $t_\star$. 
The fast catch-up phenomenon is therefore a direct consequence of this separation of time scales. 
Moreover, since $\delta_\epsilon$ decreases with $s$, 
harder tasks (smaller $s$) exhibit  faster catch-up.

\section{Validating  Late Switching in LLM Pretraining} \label{sec: experiments}

We now examine how the preceding theoretical results manifest in practical LLM pretraining. The experimental setup is summarized below; further details are provided in Appendix~\ref{appendix: setup}.
\begin{itemize}[leftmargin=2em,  topsep=0pt, itemsep=2pt, parsep=1pt]
    \item {\bf Small-scale.} For small-scale experiments, we adopt the popular \textbf{NanoGPT} codebase~\citep{Karpathy2022} and evaluate  standard dense {\bf \textbf{LLaMA}} architectures~\citep{touvron2023llama}  on the C4 dataset~\citep{raffel2020exploring}. Following Chinchilla law~\citep{hoffmann2022training}, the total number of training tokens is set to be approximately 20$\times$ the number of model parameters,  a convention commonly adopted in small-scale training studies. Concretely, we consider  model sizes of \textbf{50M}, \textbf{200M}, and \textbf{492M ($\approx$ 0.5B)} parameters.
    \item {\bf Large-scale.} We conducted large-scale experiments using the widely adopted \textbf{Megatron-LM} codebase~\citep{shoeybi2019megatron}. Our models are based on a sparse Mixture-of-Experts (MoE) architecture, specifically the shortcut-connected MoE proposed by~\cite{cai2024shortcut}.
    To better reflect real-world  LLM pretraining, we train our models with token-to-parameter ratios that substantially exceed the canonical 20:1 guideline, placing our experiments in a beyond-Chinchilla-optimal regime~\citep{sardana2023beyond}. We consider two model configurations:
    (i) {\bf 1001M ($\approx$ 1B)} total parameters with 209M parameters activated per token, trained on approximately {\bf 0.4T} tokens;
    (ii) {\bf 1119M ($\approx$ 1.1B)} total parameters with 291M parameters activated per token, trained on approximately {\bf 1T} tokens.
\end{itemize}

\paragraph*{Fine-grained analysis of the switching time.}
Figure~\ref{img: validation} (left) shows how the final-step loss varies with the switching time. 
The optimal switching point occurs at approximately $70\%$ of the total training tokens, corroborating the theoretical prediction in Section~\ref{subsec: later switch theory}: \emph{late switching} yields improved performance.

We next validate Theorem~\ref{thm: token}, which establishes a power-law relation between the optimal switching point $\zet^\star_D$ and the total data size $D$:
$
    D - \zet^\star_D \sim c D^\gamma,
$
for some $c>0$ and $\gamma\in (0,1)$. 
Taking logarithms yields the linear relation
$
    \log(D - \zet^\star_D) = \gamma \log D + \log c.
$
We conduct experiments with a 50M-parameter model trained on C4, with token budgets ranging from 1.3B to 5B. For each $D$, we perform a grid search to determine the optimal switching point $\zet^\star_D$, and fit $\log(D - \zet^\star_D)$ against $\log D$ using least squares. As shown in Figure~\ref{img: validation} (right), the fitted line indeed closely follows a power-law relation.

\begin{figure}[!ht]
    \centering
    \includegraphics[width=0.3657\textwidth]{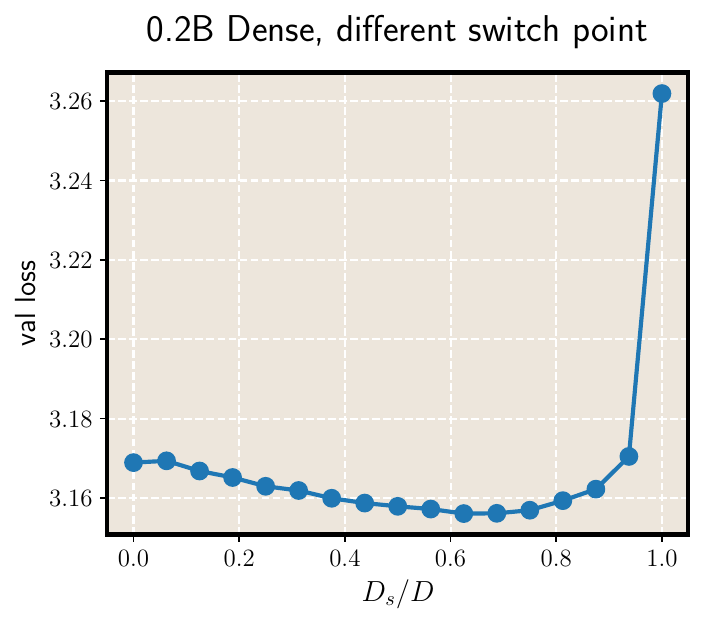}
    \includegraphics[width=0.394\linewidth]{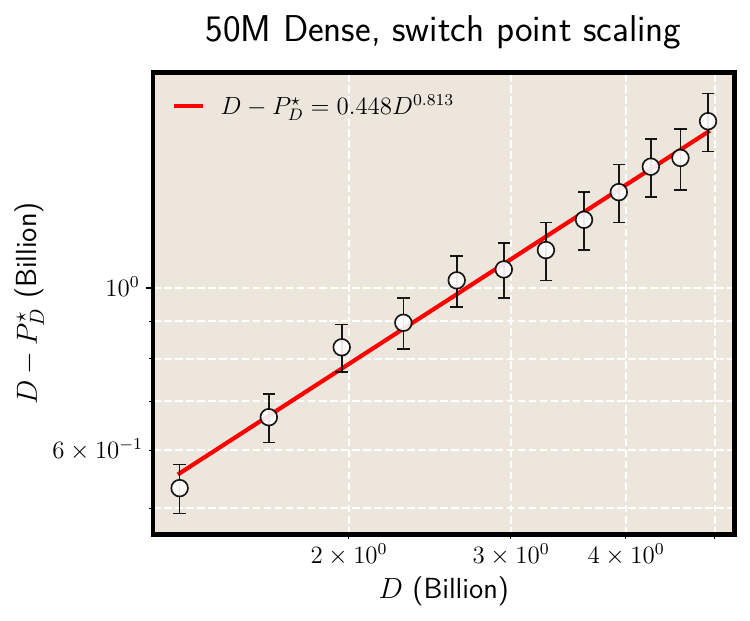}
    \vspace*{-1em}
    \caption{\small
    \textbf{Left:} Validation loss under different batch size switching points. The $x$-axis denotes the fraction of data processed before switching. \textbf{Right:} Power-law scaling between  $D-\zet_D^\star$ and $D$. A linear fit in log--log coordinates yields $R^2=0.990$, supporting the predicted power-law relation.
    }
    \label{img: validation}
\end{figure}


\paragraph*{Larger-scale validation of late-switch superiority.}
We now turn to large-scale settings and demonstrate that late switching consistently outperforms early switching. The main results are presented in Figure~\ref{img: data-optimal}, with additional details provided in Figure~\ref{img: data-optimal2}. Across different switching ratios, model architectures, and training scales, late switching yields consistently better performance than early switching. 
We further evaluate the late-switch principle in multi-stage batch-size schedules (Appendix~\ref{appendix: later switch}, Figures~\ref{fig: schedule1} and~\ref{fig: schedule2}), where deferring batch-size increases continues to outperform early switching. Together, these results confirm that late switching is robust across model and data scales.

\begin{figure}[t]
    \centering
    \includegraphics[width=0.325\linewidth]{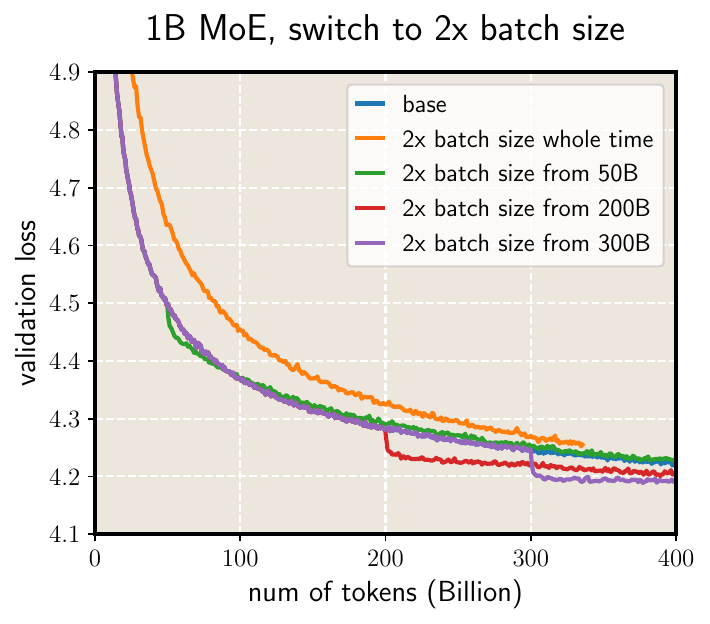}
    \includegraphics[width=0.325\linewidth]{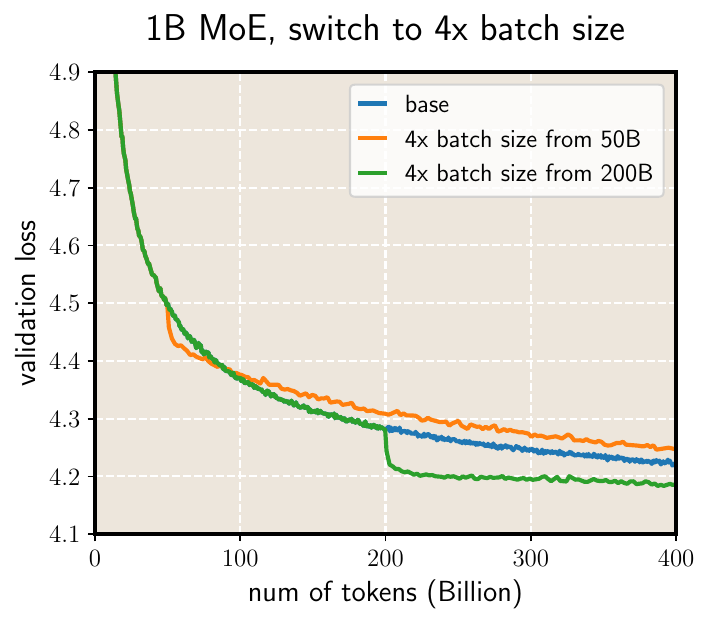}
    \includegraphics[width=0.333\linewidth]{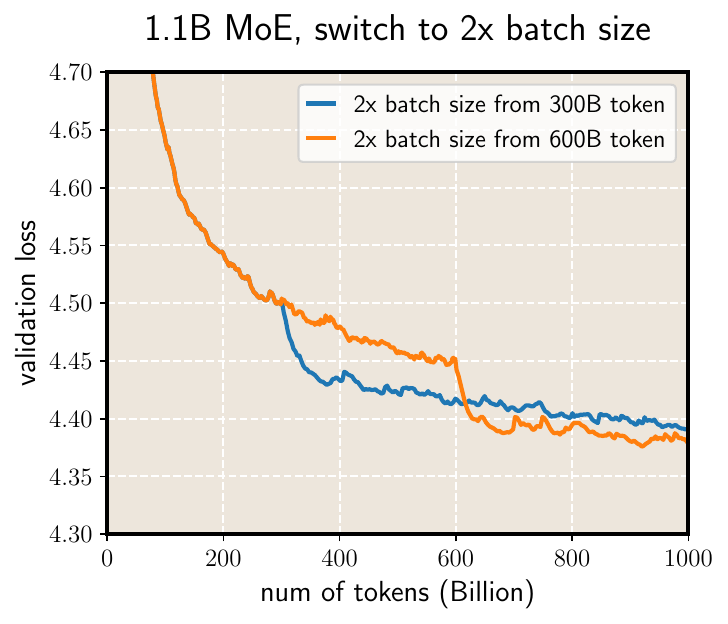}
    
    \caption{
    \small \textbf{Left:}  Validation loss versus training tokens under different switch points for a 1B MoE model trained on 0.4T tokens; batch size increases from 640 to 1280.
\textbf{Middle:} Same 1B MoE model and dataset; batch size increases from 512 to 2048.
\textbf{Right:} 1.1B MoE model trained on 1T tokens; batch size increases from 1024 to 2048.
    }
    \label{img: data-optimal}
\end{figure}

\section{Conclusion}

In this work, we demonstrate that the functional scaling law (FSL) provides a principled framework for analyzing batch size scheduling. We characterize the optimal batch size schedules in both the unconstrained and stage-wise settings, and show that the optimal structure depends sharply on task difficulty. In particular, hard tasks favor \emph{late switching}: using a small batch size for most of training and transitioning to a large batch only in a late stage. To explain this structure, we uncover the \emph{fast catch-up effect} and show that it extends beyond the theoretical setting to realistic LLM pretraining.

Several important directions remain for future work. First, the FSL framework is derived under standard SGD, whereas modern LLM training predominantly relies on adaptive optimizers. Extending the analysis to adaptive methods is therefore an important open problem. 

Second, for analytical clarity, our analysis focuses on constant learning rates. This assumption is meaningful in its own right, as widely used learning rate schedules such as warmup–stable–decay maintains a constant learning rate throughout most of training~\citep{zhai2022scaling, hu2024minicpm, hagele2024scaling}. Nevertheless, understanding the joint effect of learning rate decay and batch-size scheduling—particularly how learning rate decay influences the fast catch-up effect and the resulting late-switch strategy—remains an important direction. As a preliminary exploration, we provide experiments with cosine learning-rate decay in Appendix~\ref{sub:extension_interaction_with_learning_rate_scheduling}, which suggest that the fast catch-up effect continues to hold approximately. A systematic  treatment of these interactions is left for future work.

\vspace*{-1em}
\section*{Acknowledgments}
Lei Wu is supported by the National Natural Science Foundation of China (NSFC12522120,
NSFC92470122, and NSFC12288101). Binghui Li is supported by the Elite Ph.D.~Program in
Applied Mathematics at Peking University. Mingze Wang is supported by Young
Scientists (PhD) Fund of the National Natural Science Foundation of China (No.~124B2028). We also
thank Weinan E, Zilin Wang, Shaowen Wang, Kairong Luo, Haodong Wen and Kaifeng Lyu for many helpful discussions, and the anonymous reviewers for their valuable feedback.

\nocite{lau2024adaptive,lau2024adadagrad}

\bibliography{scaling-law-ref}
\bibliographystyle{iclr2026_conference}

\appendix

\newpage






\makeatletter
\@ifpackageloaded{iclr2026_conference}
{\begin{center}
    \noindent\rule{\textwidth}{4pt} \vspace{-0.2cm}
    \LARGE \textbf{Appendix} 
    \noindent\rule{\textwidth}{1.2pt}
\end{center}
\startcontents[sections]
\printcontents[sections]{l}{1}{\setcounter{tocdepth}{3}}
}{
\part{Appendix}
\parttoc
}
\makeatother

\newpage
\section{Theoretical Setup and Proofs}

\subsection{Interpretation of the Source and Capacity Conditions} \label{subsec: interpretation}

In this section, we provide a detailed description of the parameters of feature-space linear regression (aka power-law kernel regression in \citet{li2025fsl}) and their interpretation in the context of LLM pretraining.
Let $\widehat{\phi}_j := \phi_j / \lambda_j^{1/2}$ for $j \in [N]$, so that 
$\{\widehat{\phi}_j\}_{j=1}^N$ forms an orthonormal basis of $L^2(\cD)$.

\paragraph*{Model Capacity $\beta$:} A model of the form
$$
f(\cdot;\btheta)
  = \sum _{j=1}^N \theta_j \phi_j
  = \sum _{j=1}^N \theta_j \lambda_j^{1/2} \, \widehat{\phi}_j
  \eqsim \sum _{j=1}^N \theta_j\, j^{-\beta/2} \widehat{\phi}_j
$$
shows that higher-index features are increasingly down-weighted by the factor $j^{-\beta/2}$. As $\beta$ increases, the spectrum decays more rapidly, causing the model to \textbf{effectively} rely on fewer features. 

Additionally, for a fixed target function $f^\star$, one can use different (potentially nonlinear) feature maps $\boldsymbol{\phi}$ (and consequently, different values of $\beta$). The value of $\beta$ reflects the \textbf{capacity} of the chosen features. For instance, consider $\boldsymbol{\phi}(\boldsymbol{x}) = \nabla_{\theta}\mathcal{N}(\boldsymbol{x};\theta)$, where $\mathcal{N}(\cdot;\theta)$ denotes a neural network. In this case, $\boldsymbol{\phi}(\boldsymbol{x})$ corresponds to neural tangent features, and the associated kernel
$$
K_\phi(\boldsymbol{x},\boldsymbol{x}') := \boldsymbol{\phi}(\boldsymbol{x})^\top \boldsymbol{\phi}(\boldsymbol{x}')
$$
is known as the neural tangent kernel (NTK). Here, the network depth and activation functions govern the spectral decay, determining the effective exponent $\beta$.

\paragraph*{Task Difficulty $s$:} The target function admits the expansion
$$
f^\star =\sum_{j=1}^N  \theta_j^\star \phi_j  \eqsim \sum _{j=1}^N j^{-1/2} \lambda_j^{s/2} \widehat{\phi}_j \eqsim \sum _{j=1}^N  j^{-(s \beta +1)/2}\widehat{\phi}_j.
$$
Since $\{\widehat{\phi}_j\}$ are orthonormal, this assumption implies that the spectral energy of $f^\star$ decays according to a power law. The exponent $\alpha:=s\beta$ thus quantifies the task's \textbf{intrinsic difficulty}, which depends only on the target function itself and is independent of the model's spectrum. In contrast, $s$ measures the \textbf{relative difficulty} with respect to a model of capacity $\beta$: for a fixed $f^\star$ (and fixed $\alpha$), adopting a higher-capacity model (smaller $\beta$) increases $s=\alpha/\beta$, making the task relatively easier. In other words, the same task appears easier to a higher-capacity model.

\paragraph*{Connection with LLM Pretraining.} In the context of large language model (LLM) pretraining, the parameter $\beta$ reflects the \textbf{model architecture} and determines its capacity. Specifically, $\beta$ is influenced by factors such as the depth of the model, the activation functions, and the choice of feature map. A model with a larger capacity (smaller $\beta$) has a spectrum that decays more slowly, allowing it to utilize a broader range of features, whereas a model with a smaller capacity (larger $\beta$) down-weights higher-index features more rapidly.
On the other hand, the parameter $s$ reflects the \textbf{difficulty of the task} relative to the model architecture. It quantifies how challenging a particular task is for a given model capacity $\beta$. For a fixed target function $f^\star$, increasing the model's capacity (reducing $\beta$) leads to a lower value of $s$, making the task easier. In other words, the same task will appear \textbf{easier} to a model with a higher capacity, because the model can better accommodate the complexity of the task due to its architecture.

\subsection{Proof of Theorem~\ref{thm: fsl} (Self-Contained Derivation of FSL)}\label{subsec: proof for fsl}

A key insight that makes the above SDE~\eqref{eqn: sde} analytically tractable is the anisotropic noise structure, which can be formalized as follows:
\begin{lemma}[Anisotropic noise]\label{lemma: noise-covariance-0}
For any $\btheta \in \RR^N$, it holds that
$$
(2\cE(\btheta) + \sigma^2)\, \mathbf{H}
\preceq \bSigma(\btheta) \preceq
(4\cE(\btheta) + \sigma^2)\, \mathbf{H}.
$$
\end{lemma}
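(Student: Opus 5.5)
Since $\bSigma(\btheta)$ is the covariance of the single-sample gradient, the plan is to compute it exactly, using Gaussianity of $\bphi$ through Isserlis' (Wick's) theorem. Write $\bx\sim\cD$, $\bphi=\bphi(\bx)\sim\cN(\boldsymbol{0},\mathbf{H})$, $y=\langle\bphi,\btheta^\star\rangle+\epsilon$ with $\epsilon$ independent of $\bphi$, and set $\bv:=\btheta-\btheta^\star$. The per-sample gradient is
\[
g(\btheta)=\bphi\bigl(\langle\bphi,\btheta\rangle-y\bigr)=\bphi\bphi^\top\bv-\epsilon\bphi,
\]
whose mean is $\nabla\cR(\btheta)=\mathbf{H}\bv$. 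Hence
\[
\bSigma(\btheta)=\EE[g g^\top]-\mathbf{H}\bv\bv^\top\mathbf{H}.
\]

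\textbf{Step 1 (second moment).} Expanding $\EE[gg^\top]$, the cross terms vanish because $\EE[\epsilon]=0$ and $\epsilon$ is independent of $\bphi$. The noise term is $\EE[\epsilon^2\bphi\bphi^\top]=\sigma^2\mathbf{H}$. The remaining term is the fourth moment $\EE[\bphi\bphi^\top\bv\bv^\top\bphi\bphi^\top]$. For a centered Gaussian vector, Isserlis' theorem gives
\[
\EE[\bphi\bphi^\top A\bphi\bphi^\top]=\mathrm{tr}(A\mathbf{H})\,\mathbf{H}+\mathbf{H}(A+A^\top)\mathbf{H}.
\]
With $A=\bv\bv^\top$ this equals $(\bv^\top\mathbf{H}\bv)\mathbf{H}+2\mathbf{H}\bv\bv^\top\mathbf{H}$.

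\textbf{Step 2 (assemble).} Subtracting the squared mean and using $\bv^\top\mathbf{H}\bv=2\cE(\btheta)$ yields the exact identity
\[
\bSigma(\btheta)=(2\cE(\btheta)+\sigma^2)\mathbf{H}+\mathbf{H}\bv\bv^\top\mathbf{H}.
\]

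\textbf{Step 3 (sandwich).} The lower bound is immediate, since $\mathbf{H}\bv\bv^\top\mathbf{H}\succeq 0$. For the upper bound we need $\mathbf{H}\bv\bv^\top\mathbf{H}\preceq(\bv^\top\mathbf{H}\bv)\mathbf{H}=2\cE(\btheta)\mathbf{H}$. Testing against an arbitrary $\bu$, this reduces to
\[
(\bu^\top\mathbf{H}\bv)^2\le(\bu^\top\mathbf{H}\bu)(\bv^\top\mathbf{H}\bv),
\]
which is Cauchy--Schwarz in the $\mathbf{H}$-inner product. Adding this to the identity of Step 2 gives the upper constant $4\cE(\btheta)+\sigma^2$.

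The only delicate point is the fourth-moment computation when $N=\infty$. There I would either apply Isserlis coordinatewise in the eigenbasis of $\mathbf{H}$ (all entries are finite because $\mathrm{tr}\,\mathbf{H}<\infty$ for $\beta>1$) or argue on finite truncations and pass to the limit. The rest is routine.
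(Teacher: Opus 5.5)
Your proposal is correct and follows the same route as the paper. Both compute the covariance exactly via Wick/Isserlis to get $\bSigma(\btheta)=(2\cE(\btheta)+\sigma^2)\mathbf{H}+\mathbf{H}\bv\bv^\top\mathbf{H}$, then drop the rank-one term for the lower bound and bound it by $(\bv^\top\mathbf{H}\bv)\mathbf{H}=2\cE(\btheta)\mathbf{H}$ via Cauchy--Schwarz in the $\mathbf{H}$-inner product for the upper bound. Your extra care for $N=\infty$ goes beyond the paper, but note that what keeps the entries $\EE[\phi_i\phi_j\langle\bphi,\bv\rangle^2]$ finite is $\bv^\top\mathbf{H}\bv<\infty$, i.e.\ $\cE(\btheta)<\infty$, not $\mathrm{tr}\,\mathbf{H}<\infty$ (this matters since, e.g., $\btheta=0$ with $s\le 1$ gives $\bv\notin\ell^2$ but finite $\cE$).
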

Lemma~\ref{lemma: noise-covariance-0} demonstrates that the noise covariance \(\bSigma(\btheta)\) approximately admits a closed-form expression: \(\bSigma(\btheta) \propto \mathcal{R}(\btheta)  \mathbf{H}\), as observed in~\citep{mori2021power, wu2022alignment,wang2023theoretical}. This closed-form expression enables a precise characterization of the noise dynamics, thus providing a framework for tracking the SGD training dynamics.
\begin{proof}
For a given data point $\bz = (\bx, y)$, we define the point-wise risk as $\ell(\bz; \btheta):=\tfrac{1}{2}(\btheta^\top\bphi(\bx)-y)^2$.
By definition of $\ell(\bz;\btheta)$ and $\cR(\btheta)$, we have
\begin{align*}
    \nabla \ell(\bz;\btheta) = \bphi(\bx) \bphi(\bx)^\top\left( \btheta - \btheta^\star\right) - \bphi(\bx)\epsilon ,
\end{align*}
\begin{align*}
    \nabla \cR(\btheta) = \EE[\nabla \ell(\bz;\btheta)] = \mathbf{H}\left(\btheta - \btheta^\star\right).
\end{align*}
For the stochastic mini-batch gradient noise $\bxi:=\nabla \ell(\bz;\btheta) - \nabla \cR(\btheta)$, we have
\begin{align*}
    \bxi = \bphi(\bx) \bphi(\bx)^\top\left(\btheta - \btheta^\star\right) - \bphi(\bx)\epsilon - \mathbf{H}\left(\btheta - \btheta^\star\right).
\end{align*}
Hence, the covariance matrix $\bSigma(\btheta) = \EE[\bxi\bxi^\top | \btheta]$ satisfies
\begin{align*}
    \bSigma(\btheta)
    =    \left(\EE\left[
    \bphi(\bx) \bphi(\bx)^\top \bu
    \bu^\top \bphi(\bx)
    \bphi(\bx)^\top\right] - \mathbf{H} \bu
    \bu^\top \mathbf{H} \right)  
    + \sigma^2   \mathbf{H}   ,
\end{align*}
where $\bu = \btheta - \btheta^\star$. 
Let $\bM := \EE\left[\bphi(\bx) \bphi(\bx)^\top \bu\bu^\top \bphi(\bx)\bphi(\bx)^\top\right]$ and $\bM_{ij}$ be $(i,j)$ entry of $\bM$.  Calculating $\bM_{ij}$ using Wick's probability theorem
\begin{align*}
    \bM_{ij} = \sum_{k,l}\bu_{k}\bu_{l}\EE[\bphi_{i}(\bx)\bphi_{k}(\bx)\bphi_{l}(\bx)\bphi_{j}(\bx)] = \sum_{k,l}\bu_{k}\bu_{l} (\mathbf{H}_{ik}\mathbf{H}_{lj} + \mathbf{H}_{il}\mathbf{H}_{kj}+\mathbf{H}_{ij}\mathbf{H}_{kl}).
\end{align*}
Recognizing each term, we know
\begin{align*}
    \sum_{k,l}\bu_{k}\bu_{l} \mathbf{H}_{ik}\mathbf{H}_{lj} =& (\mathbf{H}\bu \bu^\top \mathbf{H})_{ij} \\
    \sum_{k,l} \bu_{k}\bu_{l}\mathbf{H}_{il}\mathbf{H}_{kj} =& (\mathbf{H}\bu \bu^\top \mathbf{H})_{ij} \\
\sum_{k,l}\bu_{k}\bu_{l} \mathbf{H}_{kl}\mathbf{H}_{ij} =& (\bu^\top\mathbf{H}\bu) \mathbf{H}_{ij}.
\end{align*}
Hence
\begin{align*}
    \bM  = 2 \mathbf{H} \bu\bu^\top \mathbf{H} + (\bu^\top\mathbf{H}\bu) \mathbf{H}
\end{align*}
\begin{align*}
    \bSigma(\btheta) = \mathbf{H} \bu\bu^\top \mathbf{H}  + (\bu^\top\mathbf{H}\bu) \mathbf{H} + \sigma^2   \mathbf{H} .
\end{align*}
Noting that $\bu^\top\mathbf{H}\bu = 2\cE(\btheta)$, for any vector $\bx$ with the same shape of $\bu$, we have
\begin{align*}
    \bx^\top(\mathbf{H}\bu\bu^\top\mathbf{H})\bx = \langle \bu, \bx \rangle_\mathbf{H}^2 \le \langle \bu, \bu \rangle_\mathbf{H} \langle \bx, \bx \rangle_\mathbf{H} = (\bu^\top\mathbf{H}\bu) \bx^\top\mathbf{H}\bx.
\end{align*}
Hence $\mathbf{H} \bu\bu^\top \mathbf{H}  \preceq  (\bu^\top\mathbf{H}\bu)\mathbf{H}$ and 
\begin{align*}
    \bSigma(\btheta) \succeq &\:(\bu^\top\mathbf{H}\bu) \mathbf{H} + \sigma^2   \mathbf{H}= (2\cE(\btheta) + \sigma^2) \mathbf{H} \\
    \bSigma(\btheta) \preceq  &\: 2 (\bu^\top\mathbf{H}\bu) \mathbf{H} + \sigma^2   \mathbf{H}= (4\cE(\btheta) + \sigma^2) \mathbf{H}.
\end{align*}
\end{proof}
Now we proceed to the main theorem.
\begin{proof}
For the SDE~\eqref{eqn: sde}
\begin{align*}
        \dd \btheta_t = - \nabla \cR(\btheta_{t})\dd t + \sqrt{\frac{\eta}{b(t)}\bSigma(\btheta_t)}\dd \bB_t .
\end{align*}
For each coordinate $j$, we define $p_j \coloneqq \be_j^\top \bSigma(\btheta_t) \be_j $, we have 
\begin{align*}
        \dd \theta_j(t) = - \lambda_j (\theta_j - \theta_j^\star) \dd t + \sqrt{\frac{\eta}{b(t)} p_j}\dd B_j(t) .
\end{align*}
Applying It\^{o}'s formula to $(\theta_j - \theta_j^\star)^2$, we obtain
\begin{align*}
    \EE[(\theta_j - \theta_j^\star)^2] = |\theta_j^\star|^2 e^{-2\lambda_j t}  +  \int_0^t  e^{-2\lambda_j(t-z)} \frac{\eta}{b(z)} p_j \dd z.
\end{align*}
\begin{align*}
    2\EE[\cE(\btheta_t)] &=    \sum_{j=1}^\infty \lambda_j |\theta_j^\star|^2 e^{-2\lambda_j t} + 
    \sum_{j=1}^\infty \lambda_j\int_{0}^{t} e^{-2\lambda_j (t-z)}\frac{\eta}{b(z)} p_j \dd z .
\end{align*}
By Lemma~\ref{lemma: noise-covariance-0}, it is trivial that $p_j = \be_j^\top \bSigma(\btheta_t) \be_j  \eqsim \lambda_j(\cE(\btheta_t)+\sigma^2/2)$, we have the following Volterra equation:
\begin{align*}
    2\EE[\cE(\btheta_t)] & \eqsim \sum_{j=1}^\infty \lambda_j |\theta_j^\star|^2 e^{-2\lambda_j t} + 
    \sum_{j=1}^\infty \lambda_j\int_{0}^{t} e^{-2\lambda_j (t-z)}\frac{\eta}{b(z)} p_j \dd z \\
    & \eqsim e(t) + \int_0^t \frac{\eta}{b(z)} \cK(t-z) (\EE[\cE(\btheta_z)]+\sigma^2) \dd z,
\end{align*}
where
    \begin{align*}
        e(t) = \sum_{j=1}^{\infty}\lambda_j |\theta_{j}^\star|^{2}e^{-2\lambda_j t} \eqsim \int_{0}^{1}u^{s-1}e^{-2 u t}\dd u.
    \end{align*}
    \begin{align*}
        \cK(t)
   = \sum_{j=1}^\infty \lambda_j^2 e^{-2\lambda_j t}
   \;\eqsim\;
   \int_{0}^{1} u^{\,1-\frac{1}{\beta}} e^{-2 u t}\dd u.
    \end{align*}
Let $f(t) \coloneqq \EE[\cE(\btheta_t)]$, $g(t) \coloneqq  e(t) + \sigma^2\int_{0}^{t} \cK(t-z)\frac{\eta}{b(z)} \dd z$, and define the linear operator 
\[
 \cT f(t) \coloneqq  \int_{0}^{t}\cK(t-z)\frac{\eta}{b(z)}f(z)\dd z.   
\]
With this notation, the Volterra equation admits the compact representation
$
    f = g + \cT f.
$
Formally, the solution can be expressed via the Neumann series expansion:
\begin{align*}
    f = (\cI - \cT)^{-1} g = \sum_{i=0}^\infty \cT^i g.
\end{align*}
Note that $\cK*\cK(t) = 2 \int_0^{t/2} \cK(t- z) \cK(z) \dd z \leq 2 \cK(t/2) \int_0^{t/2} \cK(z) \dd z \lesssim \cK(t/2) \lesssim \cK(t) $, by $\eta/b(t) \leq \eta$, 
\begin{align*}
    \cT^2 g(t) \leq \eta  \int_{0}^{t} \cK*\cK(t-z) \frac{\eta}{b(z)} g(z) \dd z
    \lesssim \eta\int_{0}^{t} \cK(t-z) \frac{\eta}{b(z)} g(z) \dd z = \eta \cT g(t).
\end{align*}
Hence, we have 
\begin{align*}
    g(t) + \cT g(t) \leq f(t) \leq g(t) + \cT g(t) + \sum_{k=2}^\infty \eta^{k-1} \cT g(t) \lesssim g(t) + \frac{1}{1-\eta} \cT g(t).
\end{align*}
As a result, 
\begin{align*}
    \EE[\cR(\btheta_t)] - \frac{1}{2}\sigma^{2} = f(t) \eqsim g(t) +  \cT g(t)  \eqsim \frac{1}{t^s} + \eta \int_{0}^{t}\frac{\cK(t-r)}{b(r)} \dd r . 
\end{align*}
\end{proof}

\subsection{Proof of Theorem~\ref{thm: optimal} (Shape-Unconstrained Optimal BSS)} \label{subsec: optimal proof}

\begin{lemma}
     Define the feasible region of BSS under data $D$: 
     $$
     \gB_D \coloneqq \left\{(T, b)  \;\Big|\; T\in\RR_{>0}, b\in L^1(0,T), b(t) > 0~a.e.,  \int_{0}^{T} b(t) \dd t = D\right\}.
    $$
     Consider the following optimal batch size scheduling problem:
     $$
    \min_{(T, b) \in \gB_D} \cE[T, b] \coloneqq \frac{1}{T^s} + \int_{0}^{T} \frac{\cK(T-t)}{b(t)} \dd t.
    $$
    The optimal batch size schedule obeys
    $$
    b(t) \eqsim \frac{(T^\star - t + 1)^{\frac{1}{2\beta}-1}}{(T^\star + 1) ^{\frac{1}{2\beta}}} D,
    $$ with
    $$
        T^\star \eqsim D^\frac{1}{1/\beta + s},\quad  \cE_D^\star \eqsim D^{-\frac{s\beta}{1 + s\beta}}.
    $$
    \label{lemma: unconstrained bss}
\end{lemma}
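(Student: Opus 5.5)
The plan is to first solve the inner problem for a fixed horizon $T$ exactly, and then optimize over $T$. For fixed $T$, the map $b\mapsto \int_0^T \cK(T-t)/b(t)\,\dd t$ is strictly convex on the affine set $\{\int_0^T b = D,\ b>0\}$. So the minimizer is characterized by a pointwise Lagrange condition. I would not appeal to Euler--Lagrange machinery; instead I will verify optimality directly by Cauchy--Schwarz:
\[
\Bigl(\int_0^T \sqrt{\cK(T-t)}\,\dd t\Bigr)^2 \le \int_0^T \frac{\cK(T-t)}{b(t)}\dd t \cdot \int_0^T b(t)\,\dd t .
\]
Equality holds if and only if $b(t)\propto \sqrt{\cK(T-t)}$. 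Hence the optimal schedule at fixed $T$ is
\[
b_T(t)=D\,\frac{\sqrt{\cK(T-t)}}{\int_0^T\sqrt{\cK(u)}\,\dd u},
\qquad
\min_b \int_0^T\frac{\cK(T-t)}{b(t)}\dd t=\frac{1}{D}\Bigl(\int_0^T \sqrt{\cK(u)}\,\dd u\Bigr)^2 .
\]
This reduces the variational problem to a one-dimensional problem in $T$ and gives the shape $b\propto (T-t+1)^{-(1-\frac{1}{2\beta})}=(T-t+1)^{\frac{1}{2\beta}-1}$ claimed in the statement.

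Next I would evaluate the normalizing integral. Since $\sqrt{\cK(u)}=(u+1)^{-(1-\frac{1}{2\beta})}$ and $1-\frac{1}{2\beta}\in(\tfrac12,1)$ for $\beta>1$, the integral diverges at infinity like a power:
\[
\int_0^T \sqrt{\cK(u)}\,\dd u=\frac{(T+1)^{\frac{1}{2\beta}}-1}{\frac{1}{2\beta}}\eqsim (T+1)^{\frac{1}{2\beta}} \quad (T\gtrsim 1).
\]
Plugging this in yields $b_T(t)\eqsim D\,(T-t+1)^{\frac{1}{2\beta}-1}/(T+1)^{\frac{1}{2\beta}}$, which is the stated form. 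The reduced objective is
\[
\phi(T):=\min_b \cE[T,b]\eqsim T^{-s}+\frac{(T+1)^{1/\beta}}{D}.
\]

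Then I minimize over $T$. The first term decreases in $T$ and the second increases, so up to constants the minimum is attained where they balance: $T^{-s}\eqsim T^{1/\beta}/D$, i.e. $T^\star\eqsim D^{\frac{1}{s+1/\beta}}$. The optimal value is then
\[
\cE_D^\star\eqsim (T^\star)^{-s}=D^{-\frac{s}{s+1/\beta}}=D^{-\frac{s\beta}{1+s\beta}}.
\]
To make this rigorous with only $\eqsim$ information, I would write $\phi(T)\eqsim \psi(T):=T^{-s}+T^{1/\beta}/D$ for $T\ge 1$. Then $\min\phi\eqsim\min\psi$, and any near-minimizer of $\phi$ satisfies $\psi(T)\lesssim\min\psi$. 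Since $\psi(T)\ge T^{-s}$ and $\psi(T)\ge T^{1/\beta}/D$, this forces $T\eqsim D^{1/(s+1/\beta)}$. The regime $T<1$ is excluded because there $\phi\gtrsim 1\gg D^{-s\beta/(1+s\beta)}$ for large $D$.

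The main obstacle is bookkeeping rather than ideas. The exact minimizer over $T$ of the true objective is only pinned down up to constants, because $\cK$ enters via $\eqsim$ from Theorem~\ref{thm: fsl}. So I would state the lemma for the model objective with $\cK(t)=(t+1)^{-(2-1/\beta)}$ exactly, and interpret $T^\star$ and $b$ as holding up to constant factors. A related subtlety is that an exact minimizer over $T$ exists by continuity and coercivity: $\phi\to\infty$ as $T\to0$ (the term $T^{-s}$ blows up) and as $T\to\infty$ (the term $T^{1/\beta}/D$ blows up). Given that, the two-sided bounds above pin $T^\star$ to the claimed scale, and the form of $b$ follows from the Cauchy--Schwarz equality case.
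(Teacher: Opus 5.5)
Your proposal is correct and follows the paper's proof essentially step for step. You use Cauchy--Schwarz at fixed $T$ to get $b\propto\sqrt{\mathcal{K}(T-t)}$ and the reduced objective $T^{-s}+(T+1)^{1/\beta}/D$, then balance the two terms to obtain $T^\star\eqsim D^{\beta/(1+s\beta)}$ and $\mathcal{E}_D^\star\eqsim D^{-s\beta/(1+s\beta)}$. Your separate treatment of the regime $T<1$ and your near-minimizer argument for pinning $T^\star$ up to constants supply details that the paper leaves implicit.
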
 
\begin{proof}
We first minimize the second term of the loss under fixed $T$. By the Cauchy-Schwarz inequality, we have
$$
    \left(\int_{0}^{T} \frac{\cK(T-t)}{b(t)} \dd t \right)\left(\int_{0}^{T} b(t) \dd t\right) \geq \left(\int_{0}^{T} \sqrt{\cK(T - t)} \dd t\right)^2.
$$
Equality holds when
$$
b(t) = C \sqrt{\cK(T-t)} \eqsim C (T - t + 1)^{\frac{1}{2\beta}-1},
$$
where $C$ ensures $\int_{0}^{T} b(t) \dd t= D$. The minimizer must satisfy the above equality, combining with $\int_{0}^{T} b(t) \dd t = D$, we have $$b(t) \eqsim D\frac{(T - t+1)^{\frac{1}{2\beta}-1}}{(T+1)^{\frac{1}{2\beta}}},$$ and consequently,
$$
    \int_{0}^{T} \frac{\cK(T-t)}{b(t)} \dd t = \frac{\left(\int_{0}^{T} \cK^{1/2}(T - t)\dd t \right)^2}{\int_{0}^{T} b(t)\dd t} \eqsim \frac{(T+1)^{1/\beta}}{D}.
$$
Consequently, define
$$
    g(T) \coloneqq \min_{\substack{(T,b)\in \gB_D}} \frac{1}{T^s} + \int_{0}^{T} \frac{\cK(T-t)}{b(t)} \dd t \eqsim \frac{1}{T^s} +  \frac{(T+1)^{1/\beta}}{D}.
$$
Minimizing the above risk with respect to T, we obtain the optimal $T^\star$
$$
    T^\star \eqsim D^\frac{1}{1/\beta + s}.
$$
Substituting $T^\star$ back, the minimum $\cE$ satisfies
$$
    \cE^\star_D = \left(T^\star\right)^{-s} + \frac{\left(T^\star\right)^{1/\beta}}{D} \eqsim D^{-\frac{s}{1/\beta + s}} = D^{-\frac{s\beta}{1 + s\beta}}.
$$
The corresponding optimal batch size schedule satisfies
$$
b^\star(t) \eqsim \frac{(T^\star - t + 1)^{\frac{1}{2\beta}-1}}{(T^\star+1)^{\frac{1}{2\beta}}} D.
$$
\end{proof}

\begin{lemma}
     Define the feasible region of BSS under data $D$: 
     $$
     \gB_D \coloneqq \left\{(T, b)  \;\Big|\; T\in\RR_{>0}, b\in L^1(0,T), B_1 \leq b(t) \leq B_2~a.e.,  \int_{0}^{T} b(t) \dd t = D\right\}.
    $$
    Consider the following optimal batch size scheduling problem:
    $$
    \min_{(T, b) \in \gB_D} \cE[T, b] \coloneqq \frac{1}{T^s} + \int_{0}^{T} \frac{\cK(T-t)}{b(t)} \dd t,
    $$
    The optimal batch size schedule must take one of two possible forms: \\
    (i) 
    $$
    b^\star(t) = C_1\sqrt{\cK(T-t)} \eqsim C_2(T-t + 1)^{\frac{1}{2\beta} - 1} \text{  for }  0 \le t \le T,
    $$
    with $ b^\star(0) \ge B_1$. \\
   (ii) 
    $$
    b^\star(t) = \left\{\begin{array}{ll}
         B_1, & \text{for } t <  T_1, \\
         C_1\sqrt{\cK(T-t)} \eqsim C_2(T-t + 1)^{\frac{1}{2\beta} - 1}, & \text{for }  t \ge T_1,
    \end{array}\right. 
    $$
    where $T_1$ is determined by the boundary-matching condition $C_2(T-T_1 + 1)^{1/(2\beta) - 1}=B_1$. 
    \label{lemma: constrained bss}
\end{lemma}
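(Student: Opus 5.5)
Fix $T$ and study the inner problem: minimize the convex functional $J[b]=\int_0^T \cK(T-t)/b(t)\,\dd t$ over the convex set $\{b: B_1\le b\le B_2 \text{ a.e.},\ \int_0^T b=D\}$. Since $b\mapsto 1/b$ is strictly convex on $(0,\infty)$ and $\cK>0$, $J$ is strictly convex. The constraint set is convex and weak-$*$ compact in $L^\infty$, and $J$ is weakly lower semicontinuous. Hence a unique minimizer exists for each feasible $T$, and it is characterized by KKT conditions. I would write the Lagrangian
\[
\int_0^T\Big(\frac{\cK(T-t)}{b(t)}+\mu\, b(t)\Big)\dd t-\mu D .
\]
Because the integrand is separable in $t$, it can be minimized pointwise over $b(t)\in[B_1,B_2]$. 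The unconstrained pointwise minimizer is $b=\sqrt{\cK(T-t)/\mu}=C_1\sqrt{\cK(T-t)}$, and the constrained one is its projection onto $[B_1,B_2]$:
\[
b^\star(t)=\min\{B_2,\max\{B_1,C_1\sqrt{\cK(T-t)}\}\}.
\]
Sufficiency follows from convexity: if some $\mu>0$ makes this $b^\star$ satisfy $\int b^\star=D$, then $b^\star$ minimizes $J$, since $J[b]+\mu\int b\ge J[b^\star]+\mu\int b^\star$ for every admissible $b$. The existence of such a $\mu$ follows from continuity and monotonicity of $\mu\mapsto\int_0^T b^\star_\mu$, which decreases from $TB_2$ to $TB_1$. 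This requires $TB_1\le D\le TB_2$, which is exactly the feasibility condition on $T$.

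Next I use monotonicity of $\cK$. Since $\cK(T-t)=(T-t+1)^{-(2-1/\beta)}$ is increasing in $t$, the function $C_1\sqrt{\cK(T-t)}\eqsim C_2(T-t+1)^{\frac1{2\beta}-1}$ is increasing in $t$. So the lower clipping at $B_1$ can only be active on an initial interval $[0,T_1)$, where $T_1$ solves $C_2(T-T_1+1)^{\frac1{2\beta}-1}=B_1$ by continuity. If the curve is already $\ge B_1$ at $t=0$, there is no clipping, which gives form (i). Otherwise we get form (ii).

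The upper cap $B_2$ would similarly act only on a terminal interval. To match the two stated forms I need to argue it is inactive. The lemma's statement implicitly treats $B_2$ as nonbinding: in Theorem~\ref{thm: optimal} there is no upper constraint. I would therefore take $B_2$ large enough (or $B_2=\infty$) so that $C_2\le B_2$ at $t=T$, and remark that otherwise a third, capped segment appears. This interpretive point is the main obstacle. I would state explicitly that the cap is assumed slack, and check it against the scaling $C_2\eqsim B_{\max}\eqsim D^{(s+1)/2}$ computed later.

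Finally, the outer minimization over $T$ does not change the shape. For the optimal $T^\star$, the optimal $b$ must be the inner minimizer at $T^\star$, so $b^\star$ has form (i) or (ii) with $T=T^\star$. This completes the plan.
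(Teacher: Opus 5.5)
Your proposal is correct and follows essentially the same route as the paper: fix $T$, use the Lagrangian/KKT conditions to obtain $b^\star(t)=\mathrm{clip}\bigl(C\sqrt{\cK(T-t)},B_1,B_2\bigr)$, and use the monotonicity of $\cK(T-t)$ in $t$ to read off the piecewise structure. Two of your additions tighten the paper's argument. First, constructing the multiplier by the intermediate value theorem and invoking strict convexity for uniqueness makes rigorous the step, only sketched in the paper via Slater's condition, that the optimizer actually has the KKT form. Second, your caveat about the $B_2$ cap is apt: the paper also derives a three-piece schedule and then silently drops the terminal $B_2$ segment, in effect taking $B_2=\infty$, as it does explicitly when applying the lemma in Theorem~\ref{thm: optimal}.
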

\begin{proof}
We now consider the constrained problem under fixed $T$ with $B_1 \le b(t) \le B_2$. Since the integrand $\cK(T-t)/b(t)$ is convex for $b(t)>0$, and the constraints are linear, so Slater’s condition holds. Consequently, any point satisfying the KKT conditions is a global minimizer. Consider the Lagrangian
$$
L[T, b] \coloneqq \int_0^T \left(\frac{\cK(T-t)}{b(t)} + \lambda b(t) + \mu(t)(B_1 - b(t)) + \xi(t)(b(t) - B_2)\right) \dd t - \lambda D 
$$
with $\mu(t) \ge 0$ and $\xi(t) \ge 0$. The stationarity condition is given by
\begin{equation}
    -\frac{\cK(T-t)}{b(t)^2} + \lambda - \mu(t) + \xi(t) = 0.
    \label{eq: stationary}
\end{equation}
The complementary slackness conditions are
\begin{equation}
    \mu(t)(B_1 - b(t)) =0,\; \xi(t)(b(t) - B_2)=0,\; B_1 \le b(t) \le B_2,\;\mu(t) \ge 0,\; \xi(t) \ge 0.
    \label{eq: slackness}
\end{equation}
Define
$$
    \gA := \{t| B_1 < b(t) < B_2\},\; \gI_1 := \{t| b(t) = B_1\},\; \gI_2 := \{t| b(t) = B_2\}.
$$
In $\gA$, both constraints are inactive, thus $\mu(t)=\xi(t)=0$. The stationarity condition \eqref{eq: stationary} yields
$$
    -\frac{\cK(T-t)}{b(t)^2} + \lambda = 0.
$$
Solving for $b(t)$, we obtain
$$
    b(t) = \sqrt{\frac{\cK(T-t)}{\lambda}}.
$$
In $\gI_1$, we have $b(t) = B_1$ and $\xi(t) = 0$; In $\gI_2$, we have $b(t) = B_2$, $\mu(t) = 0$. By stationarity \eqref{eq: stationary} and complementary slackness\eqref{eq: slackness}, we have the following two relations on $\gI_1$ and $\gI_2$: For $\gI_1$,
$-\frac{\cK(T-t)}{B_1^2} + \lambda - \mu(t) = 0$ implies $\mu(t) = \lambda -\frac{\cK(T-t)}{B_1^2} \ge 0$, which further yields $b(t) = B_1 \ge \sqrt{\frac{\cK(T-t)}{\lambda}}$; For $\gI_2$,
$-\frac{\cK(T-t)}{B_2^2} + \lambda + \xi(t) = 0$ implies $\xi(t) = \frac{\cK(T-t)}{B_2^2} - \lambda \ge 0$, which in turn yields $b(t) = B_2 \le \sqrt{\frac{\cK(T-t)}{\lambda}}$.
Therefore, the optimal batch size schedule is given by
$$
    b^\star(t) = \text{clip}\left(\sqrt{\frac{\cK(T-t)}{\lambda}}, B_1, B_2\right) = \text{clip}(C\sqrt{\cK(T-t)}, B_1, B_2),
$$
where $\text{clip}(x,a,b) = \max\{a, \min\{x,b\}\}.$ Exploiting the monotonicity of $\cK(T-t)$, the schedule admits the following piecewise form:  
    $$    
b^\star(t) = \left\{\begin{array}{ll}
         B_1, & \text{for } t < T_1, \\
         C\sqrt{\cK(T-t)} \eqsim C^{'} (T-t)^{\frac{1}{2\beta} - 1}, & \text{for }  T_1 \le t \le T_2, \\
         B_2, & \text{for } t > T_2,
    \end{array}\right.
    $$
where $C$ and $C^\prime$ are problem-dependent constants that depend on $D, T, \beta, s, B_1, B_2$. 

    (i) When $T_1=0$, the schedule takes the \textbf{first} form
    $$
    b^\star(t) = C_1\sqrt{\cK(T-t)} \eqsim C_2(T-t + 1)^{\frac{1}{2\beta} - 1} \text{  for }  0 \le t \le T,
    $$
    with $ b^\star(0) \ge B_1$. 
    
   (ii) When $T_1>0$, the schedule takes the \textbf{second} form
    $$
    b^\star(t) = \left\{\begin{array}{ll}
         B_1, & \text{for } t <  T_1 \\
         C_1\sqrt{\cK(T-t)} \eqsim C_2(T-t + 1)^{\frac{1}{2\beta} - 1}, & \text{for }  t \ge T_1,
    \end{array}\right. 
    $$
    where $T_1$ is determined by the boundary-matching condition $C_2(T-T_1 + 1)^{1/(2\beta) - 1}=B_1$. 
\end{proof}
Now we proceed to the main theorem. For the main theorem, we only consider Lemma~\ref{lemma: constrained bss} with $B_1 = B_{\min}$ and $B_2=\infty$.
\begin{proof} (I)
We now consider whether the optimal batch size schedule $b^\star(t)$ in Lemma~\ref{lemma: unconstrained bss} can satisfy the constraint $b(t) \ge B_1$ under the easy-task regime. Since 
$b^\star(t)$ is non-decreasing, and 
$$b^\star(0) \eqsim D / (T^\star+1) \eqsim D^{1-\frac{1}{1/\beta + s}} \gtrsim 1.$$ 
It follows that under the easy task regime, the constraint $b^\star(t) \ge B_1$ is automatically satisfied when $D$ is sufficiently large. Consequently, we have
$$
        T^\star\eqsim D^\frac{\beta}{1+s\beta}, \quad \cE_D^\star \eqsim D^{-\frac{s\beta}{1+s\beta}}.
$$
We have $b^\star(t)\eqsim
C_2\bigl(T^\star - t + 1\bigr)^{\frac{1}{2\beta}-1}$, where the constant $C_2$ is determined from the budget constraint
$$
    \int_0^{T^\star} b^\star(t) \dd t = C_2 \int_0^{T^\star} (t + 1)^{1/(2\beta) - 1} \dd t = D,
$$
Solving for $C_2$, we obtain
$$
 C_2 \eqsim D (T^\star)^{-1/(2\beta)} = D D^{\frac{-1/2}{1+s\beta}} = D^{\frac{1/2+s\beta}{1+s\beta}}.
$$
(II) Under the hard task regime with $s < 1 - 1/\beta$ ($s = 1 - 1/\beta$ is trivially similar), the unconstrained solution in Lemma~\ref{lemma: unconstrained bss} is infeasible due to $T^\star \gtrsim D$, which trivially violates the constraint. We therefore analyze the constrained candidates in Lemma~\ref{lemma: constrained bss} and determine which achieves a lower objective value. We first consider the \textbf{second} form in Lemma~\ref{lemma: constrained bss}.
    \begin{equation}
    b^\star(t) = \left\{\begin{array}{ll}
         B_1, & \text{for } t <  T_1 \\
         B_1 (\frac{T_1+T_2-t + 1}{T_2+1})^{\frac{1}{2\beta} - 1}, & \text{for } T_1 \le t \le T := T_1 + T_2 
    \end{array}\right. 
        \label{eq: undecided optimal}
    \end{equation}
The data-budget constraint implies
\begin{equation}
    T_1 = \frac{D}{B_1} - \int_0^{T_2} (\frac{t + 1}{T_2 + 1})^{\frac{1}{2\beta} - 1} \dd t
    \label{relation t}
\end{equation}
Let $a = 1/(2\beta) - 1$. We consider the objective function
$$
\cE := \frac{1}{T^s} + \int_{0}^{T} \frac{\cK(T-t)}{b(t)} \dd t
$$
Substituting \eqref{eq: undecided optimal} into the above objective yields
$$
\cE = T^{-s} + \frac{1}{B_1} \int_{T_2}^T (t + 1)^{2a} \dd t + \frac{1}{B_1} (T_2 + 1)^a \int_{T_2}^T (t + 1)^{a} \dd t
$$
Let these three parts be $\cE_1$, $\cE_2$, and $\cE_3$, respectively. Their derivatives with respect to $T_2$ are
$$
    \frac{\dd \cE_1}{\dd T_2}  = -s T^{-s-1} \frac{\dd T}{\dd T_2}
$$
$$
    \frac{\dd \cE_2}{\dd T_2} = (T + 1)^{2a} \frac{\dd T}{\dd T_2} - (T_2 + 1)^{2a}
$$
$$
    \frac{\dd \cE_3}{\dd T_2} = \frac{2a + 1}{a + 1} (T_2 + 1)^{2a} - \frac{a}{a + 1} (T_2 + 1)^{a-1}
$$
The optimal $T_2$ must satisfy 
$$
    \frac{\dd \cE}{\dd T_2} = \frac{\dd \cE_1}{\dd T_2} + \frac{\dd \cE_2}{\dd T_2} + \frac{\dd \cE_3}{\dd T_2} = 0 
$$
For the regime $D \gtrsim 1$ , by Equation~\eqref{relation t}, we have $T_1 = D / B_1 - 2\beta \left[ (T_2+1) - (T_2+1)^{1 - \frac{1}{2\beta}}\right]$, this implies $T \gtrsim 1$. Moreover, we must have $T_2 \gtrsim 1$; otherwise, the expression above would be dominated by its first term and become negative when $D \gtrsim 1$. Keeping only the dominant terms gives
$$
    T^{-s-1} \eqsim (T_2 + 1)^{2a} \eqsim T_2^{1 /\beta - 2}.
$$
Hence,
$$
    T_2 \eqsim T^{\frac{s\beta + \beta}{2\beta - 1}}.
$$
Since $T \lesssim D$, it follows that $ T_2 \lesssim D^{\frac{s\beta + \beta}{2\beta - 1}} $ and therefore
$$
  \int_0^{T_2} \left(\frac{t + 1}{T_2 + 1}\right)^{\frac{1}{2\beta} - 1} \dd t = 2\beta\left((T_2+1)-(T_2+1)^{1-\frac{1}{2\beta}}\right) \lesssim D^{\frac{s\beta + \beta} {2\beta - 1}}.
$$
By the hard-task regime condition, $ s\beta + \beta < 2\beta - 1$. Together with \eqref{relation t}, this yields $T_1 \eqsim D$, hence, $T \eqsim D$, which yields
$$
    T_2 \eqsim D^{\frac{s\beta + \beta}{2\beta - 1}} \text{ and } \cE \eqsim D^{-s}.
$$
In particular, $\cE$ is now dominated by the signal learning term with $B_1 T_1 \ge (1 - \epsilon) D$. 
%
%
We next consider the \textbf{first} form in Lemma~\ref{lemma: constrained bss}.
$$
    b(t) = C_1\sqrt{\cK(T-t)} \eqsim C_2(T-t + 1)^{1/(2\beta) - 1} \text{ for }  0 \le t \le T.
$$

Since $b(0) \ge B_1$, we have
\begin{align*}
    D &= \int_0^T b(t) \dd t \ge B_1 \int_0^T \left(\frac{t + 1}{T+1}\right)^{1/(2\beta) -1} \dd t \\
      &= 2\beta B_1\left((T+1)-(T+1)^{1-\frac{1}{2\beta}}\right) \ge (2\beta-\epsilon)   T B_1 ,
\end{align*}
which implies the intrinsic term satisfies
$$
    T \le \frac{D}{(2\beta - \epsilon) B_1}.
$$
However, in the \textbf{second} form, $\cE$ is dominated by the signal learning term and satisfies $T \ge (1 - \epsilon) D / B_1$. Therefore, the signal-learning term under the first form is worse than that under the second form by at least a constant factor. Since $\cE$ in the second form is signal-dominated, this constant-factor improvement carries over to the total error, implying that the second form strictly dominates the first and is therefore optimal. Finally, from
$$ 
C_2 (T_2 + 1)^{\frac{1}{2\beta} - 1} = B_1,
$$
we obtain $C_2 \eqsim D^{\frac{s +1}{2}}$, which gives the desired scaling for $C_2$.
\end{proof}

\subsection{Proof of Theorem~\ref{thm: token} (Optimal Two-Stage BSS)}

\begin{proof} Recalling that\begin{align*}
   \quad b_\switch^{P}(t) =
        \begin{cases}
        B_1, & 0 < t \leq T_{s,P}, \\
        B_2, & T_{s,P} < t < T_{P}, 
        \end{cases}
        \qquad
         T_{s,P} = \frac{P}{B_1}, \quad T_{P} = \frac{P}{B_1} + \frac{D -P}{B_2}. 
\end{align*}
For clarity, we omit the explicit dependence on \(D\) in \(P(D)\). Following Theorem~\ref{thm: fsl},
$$
    \frac{\dd}{\dd P} \cE_\switch(T_{P}) = \CBC \left[-s T_P^{-s-1}+ \left(\frac{\cK(T_{P})}{B_1} + \frac{\cK\left((D-P) / B_2\right)}{B_2}\right) \right].
$$
Since $B_1 < B_2$, we have $1 / B_1 - 1/ B_2 > 0$. Note that under the two-stage batch schedule setting,
$$
T_P = \frac{P}{B_1} + \frac{D-P}{B_2}, \quad \frac{\dd T_P}{\dd P} = \frac{1}{B_1} - \frac{1}{B_2} > 0.
$$
Since $T_P \eqsim D$,
$$
-s T_{P}^{-s-1} \eqsim -D^{-s-1}, \quad
\frac{\cK(T_{P})}{B_1} \eqsim \frac{D^{-(2-\frac{1}{\beta})}}{B_1}.
$$

(I) Under the hard-task regime with $s < 1 - 1/\beta$ ($s = 1 - 1/\beta$ is trivially similar), it is trivial to , since $D^{-(2-1/\beta)} = o (D^{-s-1})$, the minimizer $P^\star$ must satisfy the stationary point condition:
$$\frac{\dd}{\dd P} \cE_\switch(T_{P})\Big|_{P=P^\star} = 0.$$ In particular,

$$
\cK \left(\frac{D-P^\star}{B_2}\right) \eqsim D^{-s-1}.
$$
Trivially, $\cK((D-P^\star) / B_2) \rightarrow 0$, By the monotonicity of $\cK$, this implies $D-P^\star \rightarrow \infty$. We have
$$
(D-P^\star)^{-(2 - 1/\beta)} \eqsim D^{-s-1},
$$
$$
D-P^\star \eqsim D^\frac{s+1}{2-1/\beta}.
$$
We now verify that the stationary point $P^\star$ is a minimizer and corresponding $T_{P^\star}$:
$$
    \frac{\dd^2}{\dd P^2} \cE_\switch(T_{P}) = \CBC \left[s(s+1)T_P^{-s-2}T_P^\prime+ \frac{\cK^\prime(T_{P})}{B_1} T_P^\prime - \frac{\cK^\prime((D-P) / B_2)}{B_2^2} \right].
$$
Note that $T_P^\prime >0$ and $\cK^\prime < 0$, it suffices to prove 
$$
    \bigg(\frac{\cK^\prime(T_{P})}{B_1} T_P^\prime - \frac{\cK^\prime((D-P) / B_2)}{B_2^2}\bigg)\Bigg|_{P=P^\star} > 0 .
$$
The above inequality is trivial since
$$
    (D-P^\star) / B_2 = o( T_{P^\star}) \;\Rightarrow\;  -\cK^\prime(T_{P^\star}) = o(-\cK^\prime((D-P^\star) / B_2)).
$$

(II) Under the easy-task regime, since $D^{-s-1} = o(D^{-(2-1/\beta)})$, we have
$$\frac{\dd}{\dd P} \cE_\switch(T_{P}) > 0.$$ for sufficiently large $D$ and for any $T_P \in [D/B_2, D/B_1]$, thus the optimum satisfies \(P^\star= 0\) for sufficiently large $D$.
\end{proof}

\section{Experimental Details and Additional Results}
\label{appendix: experiments}

\subsection{LLM Pretraining: Models, Data, and Training Setup}
\label{appendix: setup}

Unless otherwise specified, language model pretraining in Sections~\ref{sec: catch_up} and \ref{sec: experiments} uses the following settings.

\begin{table}[!ht]
    \centering
    \caption{\small Model configurations}
    \begin{tabular}{cccccc}
    \hline
         Type & \multicolumn{3}{c}{LLaMA} & \multicolumn{2}{c}{MoE} \\
        Model Size & 50M & 200M & 492M & 1001M & 1119M \\
        \hline\hline
         Activated Size & --- & --- & ---  & 209M & 291M \\
         $d_{\mathrm{model}}$ & 512 & 1024 & 1280 & 512 & 576 \\
         $d_{\mathrm{FF}}$ & 2048 & 4096 & 5120 & 1408 & 1152 \\
         $d_{\mathrm{FF\_MoE}}$ & --- & --- & --- & 1408 & 192 \\
         q$\_$head & 8 & 16 & 20 & 8 & 6 \\
         k$\_$head & 8 & 16 & 20 & 4 & 2 \\
         depth & 4 & 8 & 15 & 12 & 24 \\
         n$\_$expert & --- & --- & --- & 64 & 224 \\
         activated$\_$expert & --- & --- & --- & 3 & 16 \\
         \hline
    \end{tabular}
    \label{table: dense model config and max lrs}
\end{table}

To verify whether the observed phenomena are consistent across scales, we perform experiments under two distinct settings.

\paragraph*{Small-scale experiment settings.}
\begin{itemize}[leftmargin=2em]
    \item \textbf{Model.} LLaMA~\citep{touvron2023llama} is a dense, decoder-only Transformer architecture that integrates several modern design components, including Rotary Positional Encoding (RoPE)~\citep{su2024roformer}, Swish-Gated Linear Units (SwiGLU), and Root Mean Square Layer Normalization (RMSNorm).  
    We pretrain LLaMA models with parameter sizes ranging from 50M to 492M. A full list of model configurations is provided in Table~\ref{table: dense model config and max lrs}.
    \item \textbf{Dataset.} Colossal Clean Crawled Corpus (C4) ~\citep{raffel2020exploring} is a large-scale, publicly available language dataset widely adopted for LLM pretraining, including models such as RoBERTa~\citep{liu2019roberta} and T5~\citep{raffel2020exploring}. For tokenization, we employ the T5 tokenizer with a vocabulary size of 32,100. Following the setup of \citet{zhao2024galore,zhu2024apollo,wang2025sharpness,wang2025gradpower}, we train with a sequence length of 256. We use 1,000 linear warmup steps.
\end{itemize}

\paragraph*{Large-scale experiment settings.}
\begin{itemize}[leftmargin=2em]
    \item \textbf{Model.} Shortcut-connected Mixture of Experts (ScMoE)~\citep{cai2024shortcut} is a novel MoE architecture that addresses communication overheads in expert parallelism by introducing shortcut connections and an overlapping parallelization strategy. ScMoE decouples the usual sequential dependency between communication and computation, enabling up to 100\% overlap of those two processes, which has demonstrated notable gains in inference efficiency and throughput compared to models of a comparable scale~\citep{team2025longcat}. A full list of model configurations is provided in Table~\ref{table: dense model config and max lrs}.
    \item \textbf{Dataset.} We train on a private, real-world LLM dataset to ensure that our experiments closely reflect practical deployment scenarios. The tokenizer is configured with a vocabulary size of 131,072, and training is performed with a maximum sequence length of 8,192.
\end{itemize}



\textbf{Optimizer.} For both small-scale and large-scale experiments, we adopt the standard AdamW~\citep{loshchilov2018decoupled} optimizer as the baseline. The baseline configuration follows protocols from LLaMA pretraining~\citep{touvron2023llama}, using hyperparameters $\beta_1 = 0.9$, $\beta_2 = 0.95$, weight decay $\lambda = 0.1$, and a gradient clipping threshold of $1.0$.

\subsection{Linear Regression Experiments: Setup and Details}
\label{sub:details_for_linear_regression_experiments_section_ref_sec_theory}

We empirically validate that the optimal batch size schedule alone is sufficient to achieve the  optimal  rates attainable for one-pass SGD in  both easy-task and hard-task regimes. 

\paragraph{The easy-task regime.}
We consider a task with parameters $s = 1.0$, $\beta = 2.0$, and $\sigma = 1.0$. We set the learning rate $\eta = 0.0005$ and adopt the batch size schedule prescribed by Theorem~\ref{thm: optimal} as follows.
Recalling that the optimal schedule under the \textit{easy-task} regime satisfies 
\[
b^\star(t)\eqsim B_{\max}
\bigl(T^\star - t + 1\bigr)^{\frac{1}{2\beta}-1},
\quad 0 \le t \le T^\star \quad \text{with }B_{\max}\eqsim D^{\frac{1/2+s\beta}{1+s\beta}},\; T^\star \eqsim D^{\frac{\beta}{1+s\beta}}.
\]
Due to the discrete nature of batch sizes, we replace the continuous time variable $t$ by the iteration index $k$, and
the horizon $T^\star$ by a total number of iterations $K$. We introduce introduce a data-scale hyperparameter $D_0$ and a scale constant $\alpha>0$ to control the target data scale. The discrete batch size schedule is then constructed as 
\[
    B_k = \left\lfloor D_0^{\frac{1/2+s\beta}{1+s\beta}}(K-k + \nu)^{\frac{1}{2\beta} - 1}\right\rfloor, \quad  k = 1, \dots K  \quad \text{with } K = \big\lfloor (\alpha D_0)^{\frac{\beta}{1+s\beta}}\big\rfloor.
\]
with $\nu>0$ stabilizes the schedule near the terminal stage. Accordingly, the total data size is of the same order as the data-scale hyperparameter, i.e. $D := \sum_{k=1}^{K} B_k \eqsim D_0$. 
 We fix $\alpha = 1000$, $\nu=10$ and $D_0$ to be 2, 4, 8, 16, and 32. The corresponding values of $D$ are 6346, 13973, 30331, 64962, and 137693. As illustrated in Figure~\ref{fig: optimal bss in theory} (middle), the batch size schedule alone is sufficient to achieve the minimax optimal risk rate under the \textit{easy-task} regime.

\paragraph{The hard-task regime.}
In this case, we consider the task with  $s=0.4$, $\beta=2.0$, and $\sigma=1.0$. We set  learning rate $\eta=0.0005$ and the batch size schedule is configured according to Theorem~\ref{thm: optimal} as follows. 
Recalling that the optimal schedule under the \textit{hard-task} regime satisfies 
\[
b^\star(t) =
\begin{cases}
B_{\min}, & 0 \le t < T_1^\star, \\[0.3em]
B_{\max}
\bigl(T^\star - t + 1\bigr)^{\frac{1}{2\beta}-1},
& T_1^\star \le t \le T^\star,
\end{cases}
\]
with
\[
T^\star \eqsim D, \quad
\frac{T^\star - T_1^\star}{T^\star} \eqsim D^{-\frac{1-1/\beta-s}{2-1/\beta}}, \quad B_{\max} \eqsim D^{\frac{s+1}{2}}.
\]
Due to the discrete nature of batch sizes, we replace the continuous time variable $t$ by the iteration index $k$, and
the horizon $T^\star$ and $T_1^\star$ by a discrete training length $K$ and $K_1$. We introduce introduce a data-scale hyperparameter $D_0$ and a scale constant $C_1>0$ to control the target data scale. The discrete batch size schedule is then constructed as 
\[
    B_k = \begin{cases}
    1, & \text{ for }  k = 1, \dots K_1, \\ 
    \left\lfloor\left(\frac{K-k + \nu}{{K-K_1+\nu}}\right)^{\frac{1}{2\beta} - 1}\right\rfloor &\text{ for }   k = K_1 + 1, \dots K,\end{cases}
\]
with 
$$ 
K = \lfloor \alpha D_0\rfloor,\quad K_1 = \Big\lfloor \alpha( D_0-D_0^{\frac{s+1}{2-1/\beta}})\Big\rfloor. 
$$
where scale constant $\nu>0$ stabilizes the schedule near the terminal stage. Accordingly, the total data size is of the same order as the data-scale hyperparameter, i.e. $D := \sum_{k=1}^{K} B_k \eqsim D_0$. 
 We fix $\alpha = 1$, $b=10$ and $D_0$ to be 2000, 4000, 8000, 16000, and 32000. The corresponding values of $D$ are 6346, 13973, 30331, 64962, and 137693. As illustrated by Figure~\ref{fig: optimal bss in theory} (right), batch size schedule matches the predicted best rate achievable by one-pass SGD with learning rate schedule under the \textit{hard-task} regime.

\subsection{Additional Details of Fast Catch-Up Experiments}

We conduct fast catch-up experiments across multiple scales:
\begin{itemize}[leftmargin=2em]
    \item \textbf{0.5B model.} We train a 492M ($\approx$ 0.5B) LLaMA model with learning rate $5\times 10^{-4}$  using a two-stage batch size schedule, switching from 512 to 1024, 2048, 4096 at step 0 and step 25,000 in training, with total 100,000 steps.
    \item \textbf{1B model.} We train a 1001M ($\approx$ 1B) MoE model using a two-stage batch size schedule, switching from 640 to 1280, 2560 at 50B, 200B and 300B tokens in training. In addition, we evaluate a multi-stage schedule that progressively increases the batch size—from 640 to 1280, then 1920, and finally 2560 at 100B, 150B and 200B tokens in training, with total 600,000 steps.
    \item \textbf{1.1B model.} We train a 1119M ($\approx$ 1.1B) MoE model using a two-stage batch size schedule, switching from 1024 to 2048 at 300B and 600B tokens, with total 50,000 steps.
\end{itemize}

\subsection{Additional Details of Switching-Time Analysis Experiments}

In Figure~\ref{img: validation} (left), we train a 200M LLaMA model on 4B tokens with learning rate $1\times 10^{-3}$ using a two-stage batch size schedule, switching from 256 to 512 at different points in training. The total data size corresponding to the full large batch size training step is 30000. We switch batch size at different ratio $\{$0, 1/16, 2/16, 3/16, 4/16, 5/16, 6/16, 7/16, 8/16, 9/16, 10/16, 11/16, 12/16, 13/16, 14/16, 15/16, 16/16$\}$.  Each ratio is repeated multiple times to reduce variance in the results.

In Figure~\ref{img: validation} (right), we train a 50M LLaMA model trained on the C4 dataset with learning rate $1\times 10^{-3}$ , using a small batch size of 128 and a large batch size of 256. The total data sizes corresponding to the full large batch size training step are $\{$20000, 25000, 30000, 35000, 40000, 45000, 50000, 55000, 60000, 65000, 70000, 75000$\}$. For each data size, we perform a grid search to determine the optimal switching point $D^\star$, with a precision of $D/32$. Each configuration of $D^\star/D$ is repeated multiple times to reduce variance in the results.


\subsection{Additional Details and Results for  Late-Switch Superiority Experiments}
\label{appendix: later switch}

We conduct late-switch experiments across multiple scales:
\begin{itemize}[leftmargin=2em]
    \item \textbf{0.5B model.} We train a 492M ($\approx$ 0.5B) LLaMA model on 4B tokens with learning rate $5\times 10^{-4}$  using a two-stage batch size schedule. Specifically, we switch the batch size from 512 to either 1024, 2048, or 4096 at step 25,000.
    \item \textbf{1B model.} We train a 1001M ($\approx$ 1B) MoE model on 0.4T tokens using a two-stage batch size schedule, switching from 640 to either 1280 or 2560 at the 50B, 200B, or 300B token marks. In addition, we evaluate a multi-stage schedule that progressively increases the batch size from 640 to 1280, then 1920, and finally 2560 at 100B, 150B, and 200B tokens, respectively.
    \item \textbf{1.1B model.} We train a 1119M ($\approx$ 1.1B) MoE model on 1T tokens using a two-stage batch size schedule, switching from 1024 to 2048 at either the 300B or 600B token mark.
\end{itemize}

\begin{figure}[!ht]
    \centering
    \includegraphics[width=0.38\linewidth]{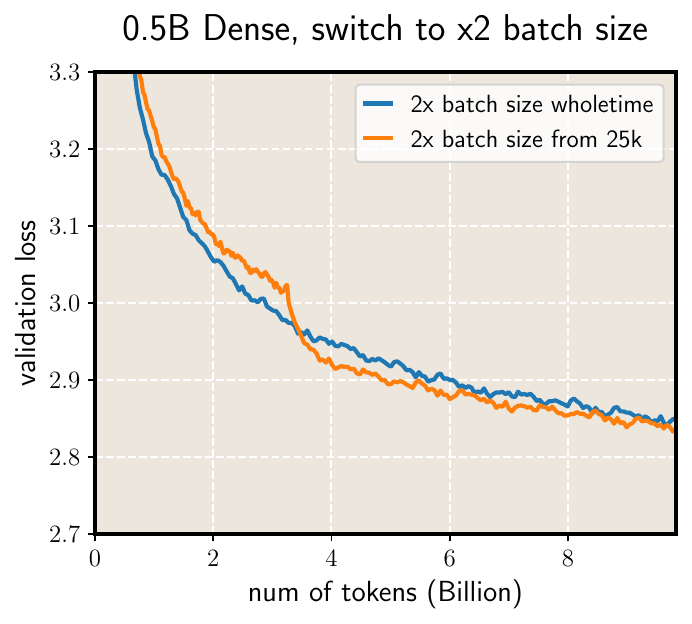}
    \includegraphics[width=0.38\linewidth]{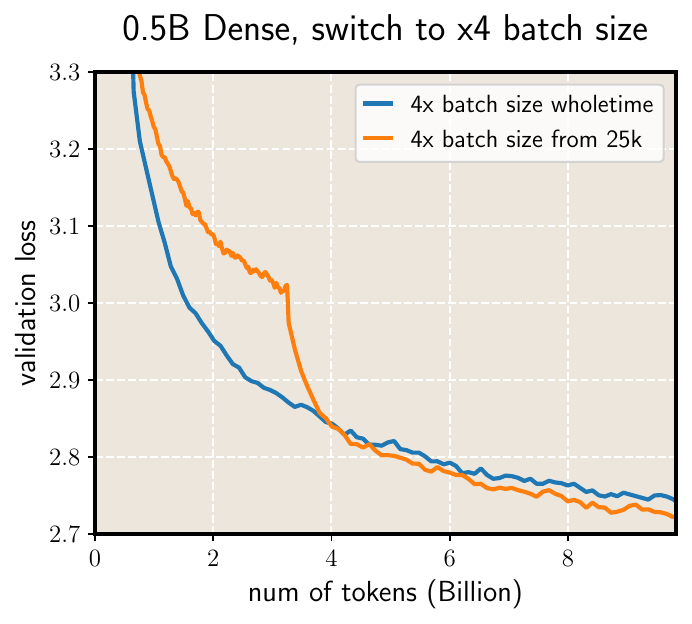}
    \caption{
    Validation loss versus training tokens under different batch size switching times using 0.5B LLaMA model trained on around 10B tokens, switching batch size from 512 to 1024 (left) and 2048 (right), respectively.
    }
    \label{img: data-optimal2}
\end{figure}

\begin{figure}[!ht]
    \centering
    \includegraphics[width=0.38\linewidth]{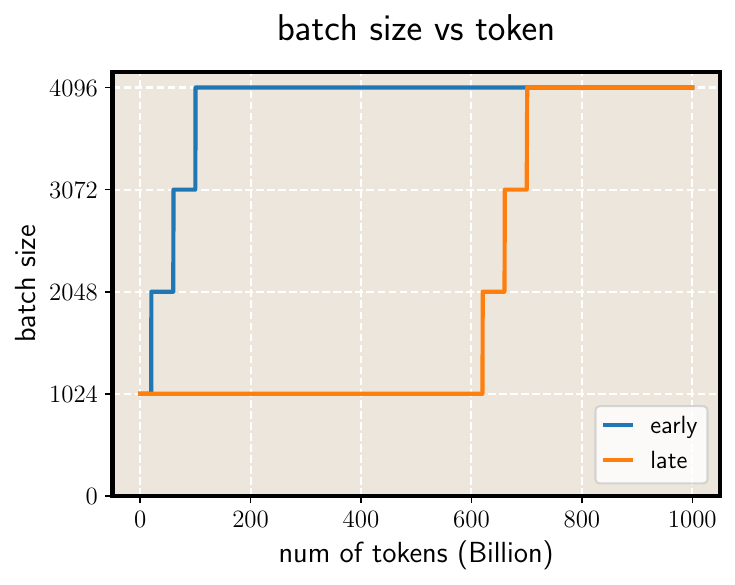}
    \includegraphics[width=0.38\linewidth]{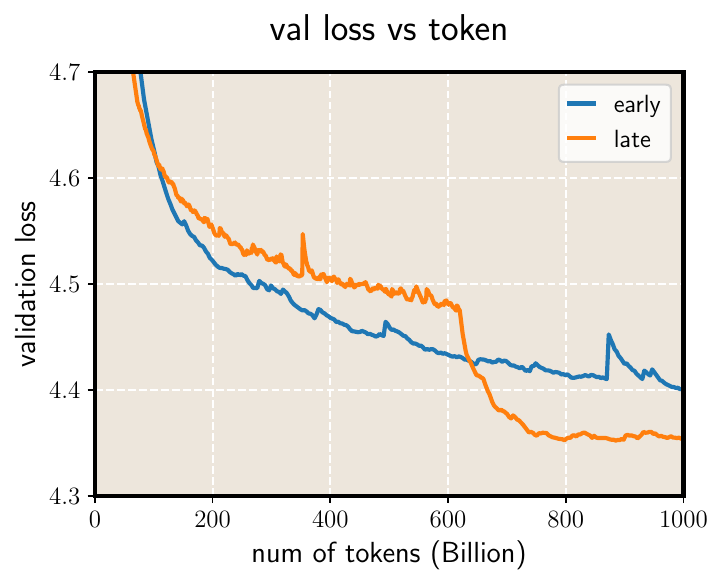}
\caption{Validation loss versus training tokens with four-stage batch size schedule using 1.1B MoE model trained on 1T tokens. \textbf{Left:} batch size versus training tokens; \textbf{Right:} validation loss versus training tokens.}
    \label{fig: schedule1}
\end{figure}

\begin{figure}[!ht]
    \centering
    \includegraphics[width=0.38\linewidth]{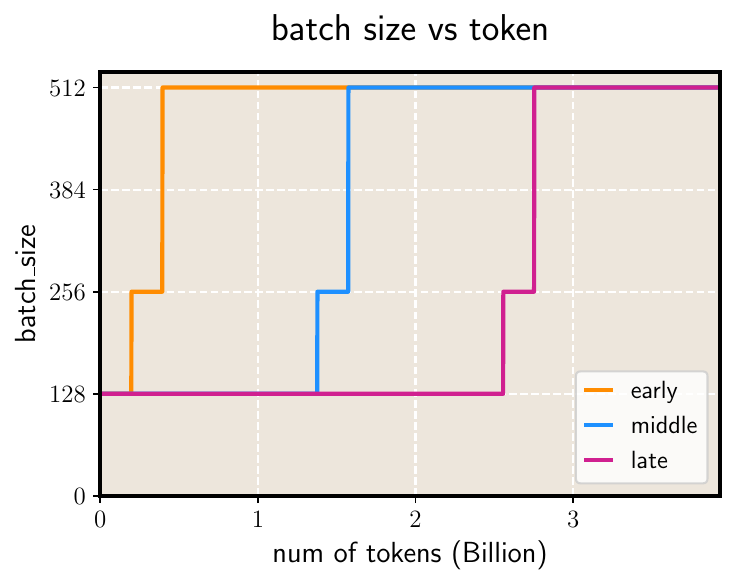}
    \includegraphics[width=0.38\linewidth]{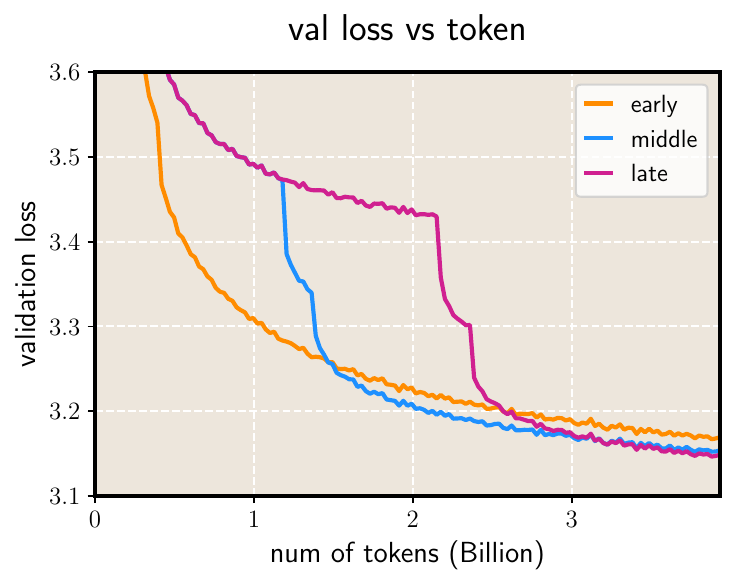}
\caption{Validation loss versus training tokens with three-stage batch size schedule using 200M LLaMA model trained on 4B tokens. \textbf{Left:} batch size versus training tokens; \textbf{Right:} validation loss versus training tokens.}
    \label{fig: schedule2}
\end{figure}

Experimental results are shown in Figure~\ref{img: validation} (left), Figure~\ref{img: data-optimal}, and Figure~\ref{img: data-optimal2}. Moreover, we compare \textit{multi-stage batch size scheduling strategies} for 200M LLaMA model and 1.1B MoE model. For 1119M MoE model, we train on 1T tokens using a four-stage batch size schedule, switching from 1024 to 2048, then 3072 and finally 4096 at different time steps. For 200M LLaMA model, we train on 4B tokens using a four-stage batch size schedule, switching from 128 to 256, then finally 512 at different time steps.

In Figure~\ref{fig: schedule1} and Figure~\ref{fig: schedule2}, the left panels show how batch size evolves with training tokens, while the right panels report the corresponding validation loss. Across both model scales, later switching consistently yields lower validation loss than earlier switching, validating the effectiveness of late-switch superiority in multi-stage batch size scheduling regime.

\subsection{Extension: Interaction with Learning Rate Scheduling}
\label{sub:extension_interaction_with_learning_rate_scheduling}
\subsubsection{Comparison with Cosine and WSD} \label{subsec: opt exp}

\begin{figure}[!ht]
    \centering
    \includegraphics[width=0.325\linewidth]{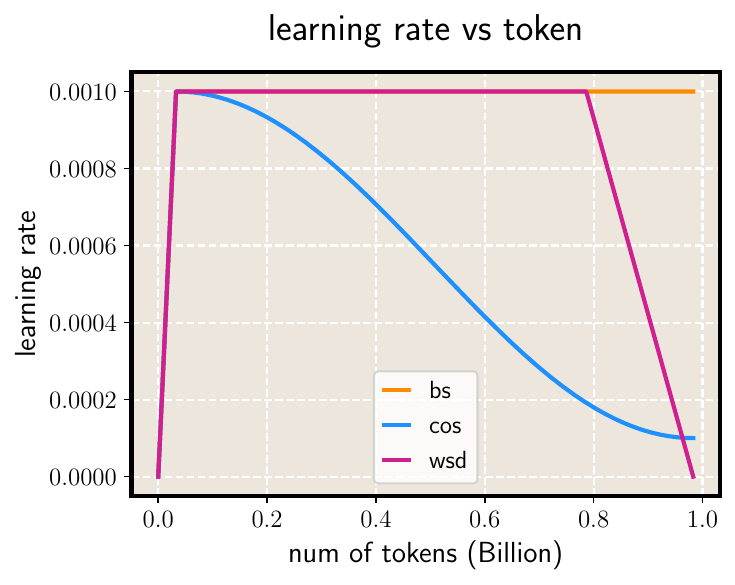}
    \includegraphics[width=0.325\linewidth]{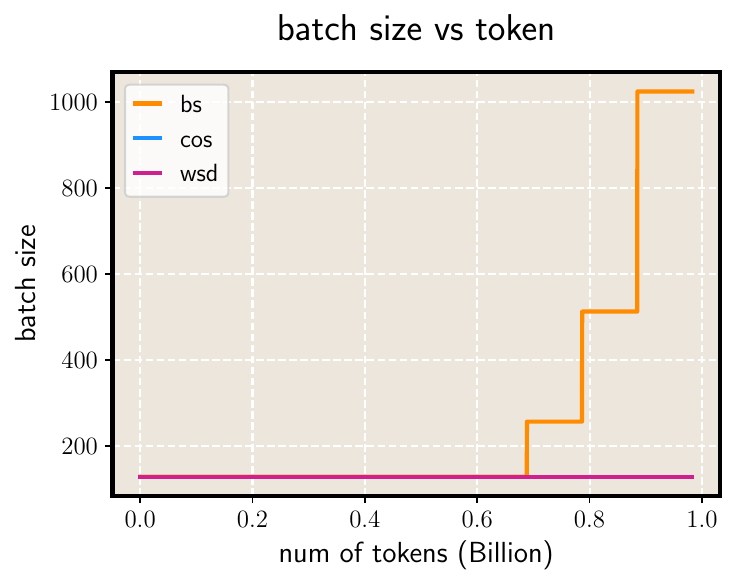}
    \includegraphics[width=0.325\linewidth]{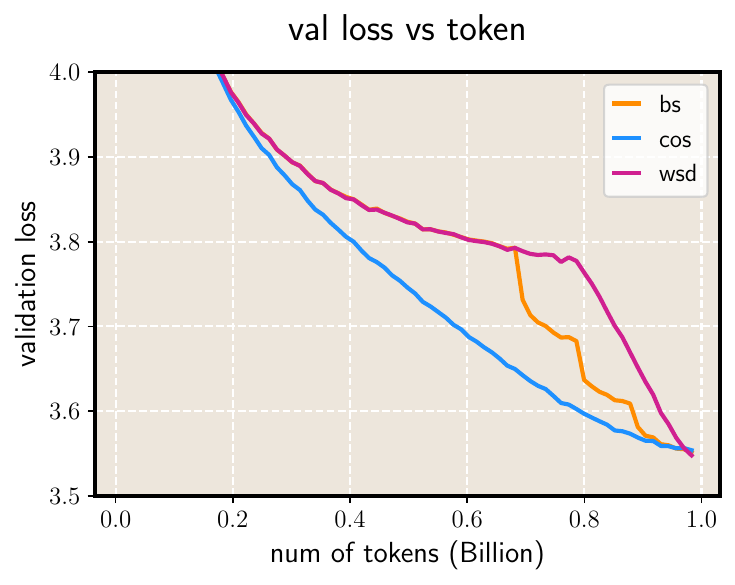}
\caption{Validation loss versus training tokens among batch size schedule, cosine decay learning rate schedule, warmup-stable-decay learning rate schedule using 50M LLaMA model trained on 1B tokens. \textbf{Left:} learning rate versus training tokens; \textbf{Middle:} batch size versus training tokens; \textbf{Right:} validation loss versus training tokens.}
    \label{fig: opt^*chedule1_s}
\end{figure}

\begin{figure}[!ht]
    \centering
    \includegraphics[width=0.325\linewidth]{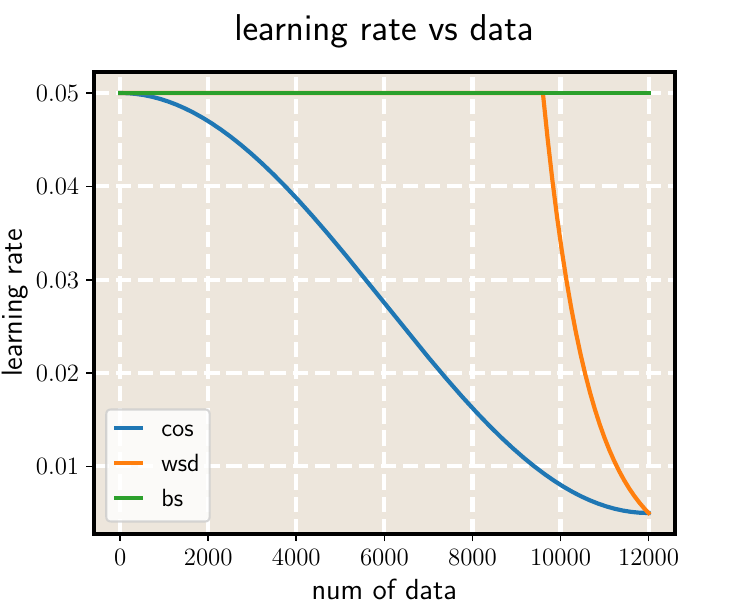}
    \includegraphics[width=0.325\linewidth]{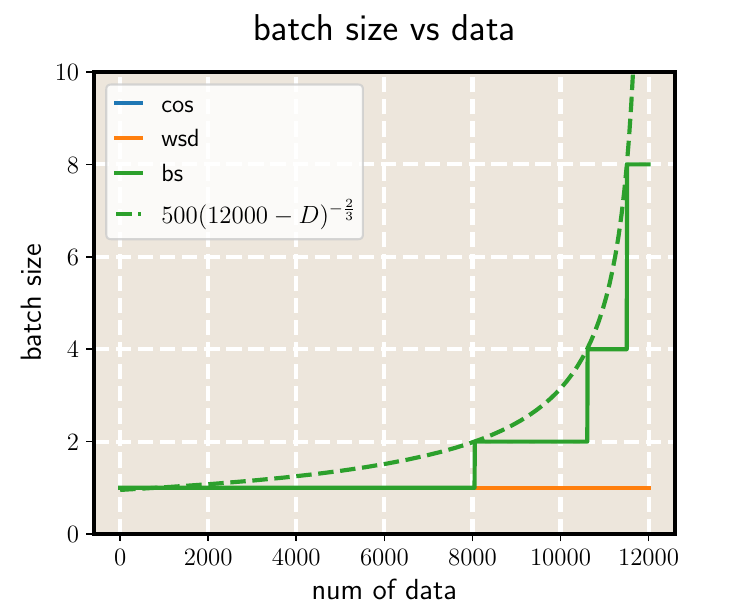}
    \includegraphics[width=0.325\linewidth]{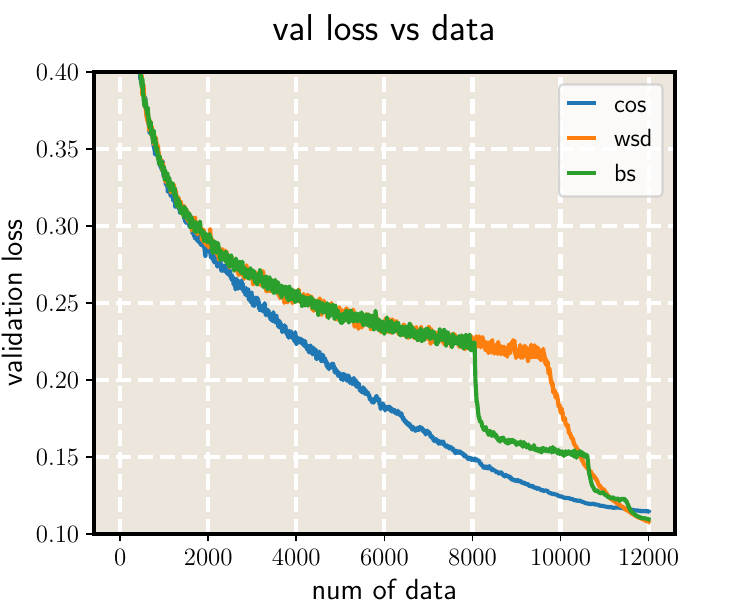}
\caption{Validation loss versus training tokens among batch size schedule, cosine decay learning rate schedule, warmup-stable-decay learning rate schedule using linear regression model. \textbf{Left:} learning rate versus training tokens; \textbf{Middle:} batch size versus training tokens; \textbf{Right:} validation loss versus training tokens.}
    \label{fig: opt^*chedule_linear}
\end{figure}

We conduct a set of proof-of-concept experiments to evaluate whether a constant learning rate with batch size schedule can perform on par with mainstream learning rate schedulers used in LLM pretraining, such as cosine decay and Warmup–Stable–Decay (WSD) schedule~\citep{hu2024minicpm,wen2024understanding}. Following established conventions~\citep{hagele2024scaling}, the cosine schedule decays the learning rate to 10\% of its maximum value, whereas the WSD schedule decays it to zero, with the ratio of decay phase as 20\%. For all figures, the left panels show the evolution of the learning rate over training tokens, the middle panels show the batch size trajectory, and the right panels report the corresponding validation loss curves. We denote the constant learning rate with batch size schedule as `bs', the cosine schedule as `cos', and the WSD schedule as `wsd'.

Figure~\ref{fig: opt^*chedule1_s} shows the comparison for LLM pretraining. For the batch size schedule, we begin with a base batch size and increase it in a stage-wise manner: switching to 2$\times$ the base batch size at 70\% of training tokens, 4$\times$ at 80\%, and 8$\times$ at 90\%. The base batch size is set to 128 for the 50M model. We emphasize that this batch size schedule is determined heuristically and is not  optimized.

Figure~\ref{fig: opt^*chedule_linear} shows the comparison for linear regression. We set $s=0.3$, $\beta=1.5$, $\sigma=2$, $\eta=0.05$, the exponent in $-2/3$ is the batch size schedule derived by $1/(2\beta)-1$. With the explicit $\beta$, we design an optimal batch size schedule according to Theorem~\ref{thm: optimal}.

We observe that, across both LLM pretraining and linear regression, a constant learning rate with an appropriately designed batch size schedule achieves performance comparable to widely adopted learning rate schedulers.

\subsubsection{Fast Catch-Up under Learning Rate Decay} \label{subsec: other lr}

\begin{figure}[!ht]
    \centering
    \subcaptionbox{cosine learning schedule, 50M Dense}[\linewidth]{
    \includegraphics[width=0.38\linewidth]{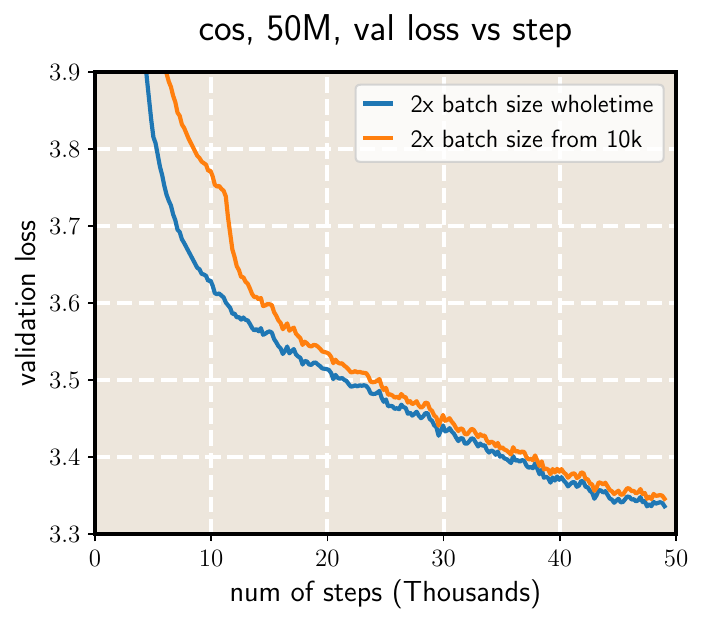}
    \includegraphics[width=0.38\linewidth]{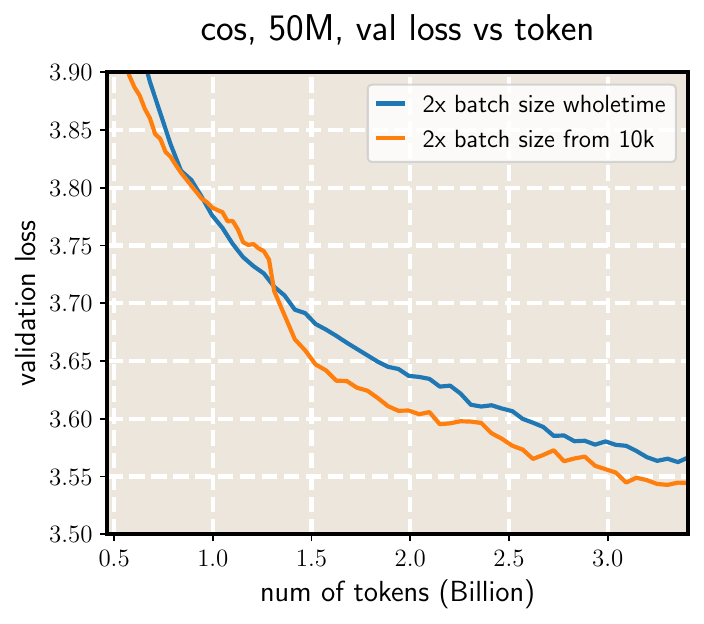}
    }
    \subcaptionbox{cosine learning schedule, 0.5B Dense}[\linewidth]{
        \includegraphics[width=0.38\linewidth]{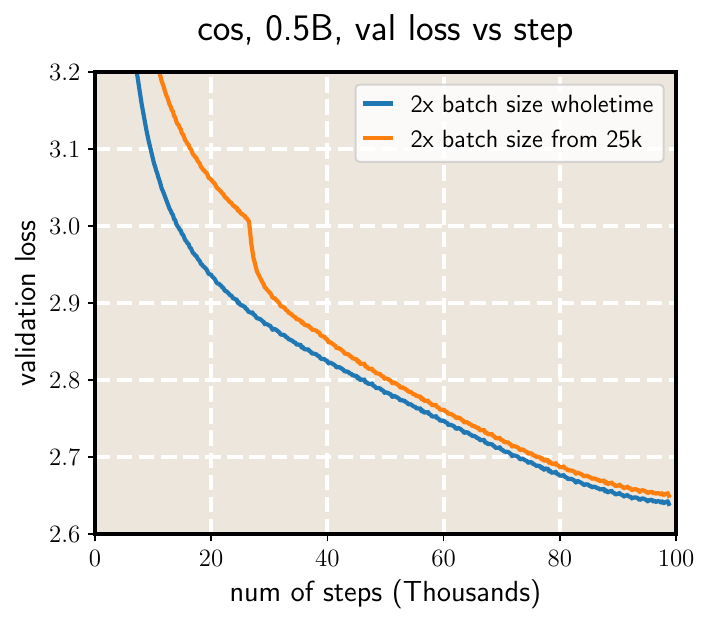}
    \includegraphics[width=0.38\linewidth]{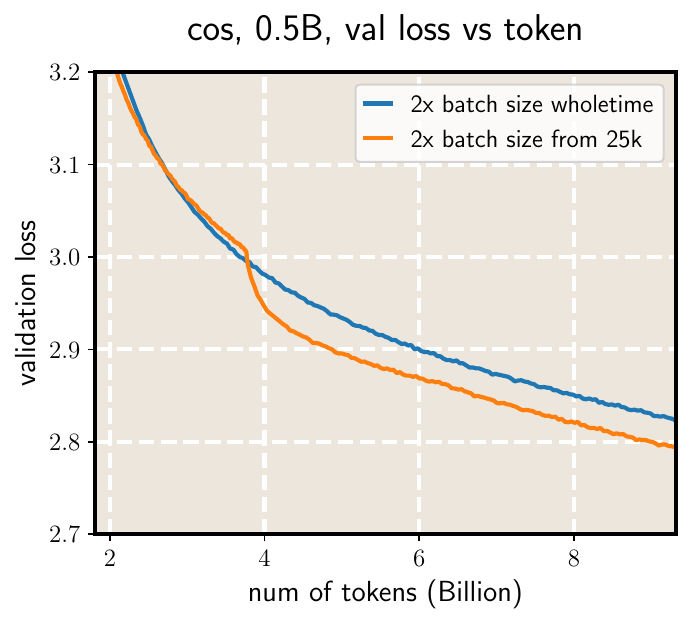}
        }
\caption{Two-stage batch size switching using 50M and 0.5B LLaMA model trained on 3.2B and 10B tokens, respectively. \textbf{Left:} validation loss versus training steps; \textbf{Right:} validation loss versus training tokens.}
    \label{fig: cos schedule}
\end{figure}

In this section, to explore the influence of learning rate decay, we replicate the late-switch superiority experiments from Appendix~\ref{appendix: later switch} on 50M and 0.5B LLaMA models using a cosine learning rate schedule. As shown in Figure~\ref{fig: cos schedule}, the characteristic phenomena—fast catch-up and later switching—persist. Note that catch-up is quantified in terms of the intrinsic time $T$. Under a constant learning rate regime, $T$ advances at a uniform pace, whereas a cosine schedule causes it to advance more slowly toward the end of training. Consequently, the apparent merge speed decreases in the final stages. While our current theoretical analysis focuses on a constant LR, the FSL mechanism is general and naturally carries over to other LR schedules, making the theoretical extension to such settings straightforward.

\section{Statement}

\subsection{Ethics Statement}

We have confirmed that this research was conducted in full compliance with the ICLR Code of Ethics. All experiments respect the principles of integrity, fairness, and transparency. No part of this work involves harm to humans, animals, or the environment, and we have taken care to ensure the responsible use of data, models, and computational resources.

\subsection{Reproducibility Statement}

We believe that all experimental results in this work are reproducible. The paper specifies comprehensive training and evaluation details—including hyperparameters, optimizer choices, and other relevant settings—in Section~\ref{sec: experiments} and Appendix~\ref{appendix: experiments}. For small-scale experiments, we provide open-source code in the supplemental material, and all datasets used are publicly available. For large-scale experiments, we believe that employing comparable datasets and training pipelines will reproduce the same phenomena.

\subsection{LLM Usage Statement}

We used the LLM as a writing assistant during paper preparation. The model was used to identify and correct grammatical errors throughout the manuscript. It suggested ways to make our sentences clearer and smoother. The LLM helped polish the language while keeping our meaning intact. We limited LLM use to only language editing tasks. All research content and ideas came entirely from human work.

Beyond serving as tools, LLMs were themselves the subject of our study. We trained these models and analyzed their behavior to uncover and explain novel phenomena. Importantly, this use of LLMs as research objects should not be misinterpreted as a substantive contribution from the models to the work itself.


\end{document}